\newcommand{\ud}{\mathrm d}
\newcommand{\kl}{\mathrm{KL}}
\newcommand{\err}{\mathrm{err}}
\newcommand{\sgn}{\mathrm{sgn}}
\newcommand{\argmin}{\mathrm{argmin}}
\newcommand{\argmax}{\mathrm{argmax}}
\newcommand{\seq}{\mathrm{str}}
\newcommand{\prob}{\mathrm{qs}}
\newtheorem{thm}{Theorem}
\newtheorem{lem}{Lemma}
\newtheorem{prop}{Proposition}
\newtheorem{cor}{Corollary}
\begin{document}
% The file aaai.sty is the style file for AAAI Press 
% proceedings, working notes, and technical reports.
%
\title{Noise-adaptive Margin-based Active Learning and Lower Bounds under Tsybakov Noise Condition}
\author{Yining Wang \and Aarti Singh\\
Machine Learning Department, School of Computer Science, Carnegie Mellon University\\
5000 Forbes Avenue, Pittsburgh PA 15213
}
\maketitle
\begin{abstract}
We present a simple noise-robust margin-based active learning algorithm
to find homogeneous (passing the origin) linear separators
and analyze its error convergence when labels are corrupted by noise.
We show that when the imposed noise satisfies the Tsybakov low noise condition \cite{tnc-ref1,tnc-ref2}
the algorithm is able to adapt to unknown level of noise and achieves optimal statistical rate up to polylogarithmic factors.
%In addition, the presented algorithm is simple and does not require prior knowledge of the amount of noise in the label distribution.

We also derive lower bounds for margin based active learning algorithms under Tsybakov noise conditions (TNC)
for the membership query synthesis scenario \cite{mqs88}.
Our result implies lower bounds for the stream based selective sampling scenario \cite{sbss90}
under TNC for some fairly simple data distributions.
Quite surprisingly, we show that the sample complexity cannot be improved
even if the underlying data distribution is as simple as the uniform distribution on the unit ball.
Our proof involves the construction of a well-separated hypothesis set on the $d$-dimensional unit ball
along with carefully designed label distributions for the Tsybakov noise condition.
Our analysis might provide insights for other forms of lower bounds as well.
\end{abstract}

\section{Introduction}

Active learning is an increasingly popular setting in machine learning that makes use of both unlabeled and selectively sampled labeled data \cite{agnostic-active,generalization-active,coarse-sample-complexity}.
In general, an active learning algorithm has access to a large number of unlabeled examples
and has the capacity to request labels of specific examples.
The hope is that by directing label queries to the most informative examples in a feedback-driven way,
we might be able to achieve significant improvements in terms of sample complexity over passive learning algorithms.
For instance, in the problem of learning homogeneous (passing the origin) linear separators,
an exponential improvement in sample complexity could be achieved under the realizable case, 
where the labels are consistent with the optimal linear classifier \cite{colt07-active,active-log-concave}.
For noisy label distributions, a polynomial improvement in sample complexity is more typical \cite{active-minimax,colt07-active,agnostic-active,active-log-concave}.

% Our contribution 1: noise-adaptive active learning. Remember to explain the two active learning scenarios
We consider two active learning scenarios in this paper: the \emph{stream-based selective sampling} scenario \cite{sbss90,generalization-active}, 
under which an algorithm has access to a large number of unlabeled data in a stream and can decide whether to query the label of a specific data point,
and the \emph{membership query synthesis} scenario \cite{mqs88} under which an algorithm has the capacity of synthesizing data points
and obtaining their labels from an oracle.
For the stream-based setting, we analyze a noise-robust margin-based active learning algorithm under the Tsybakov noise condition \cite{tnc-ref1,tnc-ref2}.
We show that the algorithm automatically adjusts to unknown noise levels in the Tsybakov noise condition (TNC)
while achieving the same statistical rate (up to polylogarithmic terms) as non-adaptive algorithms.
This makes margin-based active learning more practical, as the amount of noise in label distributions is usually unknown in practice.

% Our contribution 2: lower bound. Say why it has surprising implications
We also study lower bounds for the membership query synthesis setting under Tsybakov noise conditions.
Our lower bound matches previous ones for the stream-based selective sampling setting \cite{active-log-concave,active-survey}.
Quite surprisingly, as a consequence of our lower bound,
we show that stream-based active learning algorithms cannot do better even if the underlying data distribution is as simple as the uniform distribution,
It also means the previous proposed margin-based active learning algorithms \cite{colt07-active,active-log-concave} are optimal under their specific problem settings.
To the best of our knowledge, such results are not implied by any previous lower bounds on active learning, as we discuss in more detail below.

\section{Related work}

% non-adaptive upper bounds
A margin-based active learning algorithm for learning homogeneous linear separators was proposed in \cite{colt07-active} with its sample complexity analyzed
under the Tsybakov low noise condition for the uniform distribution on the unit ball.
The algorithm was later extended to log-concave data distributions \cite{active-log-concave}.
Recently \cite{steve-minimax} introduced a disagreement-based active learning algorithm 
that works for arbitrary underlying data distributions.
For all of the above-mentioned algorithms, given data dimension $d$ and query budget $T$,
the excess risk $\epsilon$ is upper bounded by 
\footnote{In the $\widetilde O(\cdot)$ notation we omit dependency on failure probability $\delta$ and polylogarithmic dependency on $d$ and $T$.}
$\widetilde O((d/T)^{1/2\alpha})$,
where $\alpha$ is a parameter characterizing the noise level in TNC (cf. Eq. (\ref{tnc-md}) in Section \ref{sec:setup}).
These algorithms are not noise-adaptive; that is, the selection of key algorithm parameters depend on the noise level $\alpha$,
which may not be available in practice.

% adaptive upper bounds
In \cite{hanneke-adaptive} a noise-robust disagreement-based algorithm was proposed for agnostic active learning.
The analysis was further improved in \cite{zhang-chaudhuri} by replacing the disagreement coefficient with a provably smaller quantity.
However, their error bounds are slightly worse under our settings, as we discuss in Section \ref{sec:discussion}.
Also, in both analysis the desired accuracy $\epsilon$ is fixed, while in our setting the number of active queries $T$ is fixed.
%However, their algorithm requires complicated computation of upper and lower confidence bounds on disagreement coefficients
%and has worse sample complexity than margin based methods, as we discuss in Section \ref{sec:discussion}.
Under the one-dimensional threshold learning setting, \cite{alt13} proposed a noise-adaptive active learning algorithm
inspired by recent developments of adaptive algorithms for stochastic convex optimization \cite{juditsky-nesterov}.
For multiple dimensions,
it was shown recently in \cite{margin-active-adversarial} that a noise-robust variant of margin-based active learning
achieves near optimal noise tolerance.
The authors analyzed the maximum amount of adversarial noise an algorithm can tolerate under the constraints of 
constant excess risk
and polylogarithmic sample complexity,
which is equivalent to an exponential rate of error convergence.
%Their setting is out of the scope of this paper.
In contrast, we study the rate at which the excess risk (relative to Bayes optimal classifier) converges to zero with number of samples that are not restricted to be polylogarithmic.  

In terms of negative results, it is well-known that the $\widetilde O((d/T)^{1/2\alpha})$ upper bound is tight up to polylogarithmic factors.
%In Section 4 of \cite{active-survey} lower bounds were proved for stream-based active learning.
In particular, Theorem 4.3 in \cite{active-survey} shows that
for any stream-based active learning algorithm, 
there exists a distribution $P_{XY}$ satisfying TNC such that
the excess risk $\epsilon$ is lower bounded by $\Omega((d/T)^{1/2\alpha})$.
The marginal data distribution $P_{X}$ is constructed in an adversarial manner
and it is unclear whether the same lower bound applies when $P_{X}$ is some simple (e.g., uniform or Gaussian) distribution.
\cite{active-log-concave} proved lower bounds for stream-based active learning under each log-concave data distribution.
However, their proof only applies to the separable case and shows an exponential error convergence.
In contrast, we consider Tsybakov noise settings with parameter $\alpha\in(0,1)$,
for which polynomial error convergence is expected \cite{active-survey}.

% Rob's nonparametric bound
\cite{active-minimax} analyzed the minimax rate of active learning under the membership query synthesis model (cf. Section \ref{sec:mqs}).
Their analysis implies a lower bound for stream-based setting when the data distribution is uniform or bounded from below (cf. Proposition \ref{prop_pool_probe_reduction} and \ref{prop_regression-excess-TNC}).
However, their analysis focuses on the nonparametric setting where the Bayes classifier $f^*$ is not assumed to have a parametric form such as linear.
Consequently, their is a polynomial gap between their lower bound and the upper bound for linear classifiers.

\section{Problem setup and notations}\label{sec:setup}

We assume the data points $(x, y)\in\mathcal X\times\mathcal Y$ are drawn 
from an unknown joint distribution $P_{XY}$,
where $\mathcal X$ is the instance space and $\mathcal Y$ is the label space.
Furthermore, $x$ are drawn in an i.i.d. manner.
In this paper we assume that $\mathcal X=\mathcal S^d\subseteq \mathbb R^d$ is the unit ball in $\mathbb R^d$
and $\mathcal Y=\{+1, -1\}$.
%In this paper we consider two active learning scenarios: stream-based selective sampling and membership query synthesis.
%Under both settings, an active learning algorithm can make requests on selected data points in a feedback-driven manner
%and obtain corresponding labels (possibly corrupted with noise) from a labeling oracle.

The goal of active learning is to find a classifier $f:\mathcal X\to \mathcal Y$ such that the generalization error
$\err(f) = \mathbb E_{(x,y)\sim P}[\ell(f(x),y)]$ is minimized.
Here $\ell(f(x),y)$ is a loss function between the prediction $f(x)$ and the label $y$.
Under the binary classification setting with $\mathcal Y=\{+1,-1\}$, the $0/1$ classification loss is of interest, where 
$\ell(f(x), y) = I[yf(x) > 0]$ with $I[\cdot]$ the indicator function.
In this paper we consider the case where the Bayes classifier $f^*$ is linear,
that is, $f^*(x) = \argmax_{y\in\{+1,-1\}}{\Pr(Y=y|X=x)} = \sgn(w^*\cdot x)$ with $w^*\in\mathbb R^d$.
Note that the Bayes classifier $f^*$ minimizes the generalization 0/1 error $\Pr(Y\neq f^*(X))$.
Given the optimal classifier $f^*$, we define the \emph{excess risk} of a classifier $f$ under $0/1$ loss as $\err(f)-\err(f^*)$.
Without loss of generality, we assume all linear classifiers $f(x)=\sgn(w\cdot x)$ have norm $\|w\|_2=1$.
We also use $B_{\theta}(w, \beta)$ to denote the model class $\{f(x)=w'\cdot x|\theta(w', w)\leq\beta, \|w'\|_2=1\}$
consisting of all linear classifiers that are close to $w$ with an angle at most $\beta$.
Here $\theta(w',w)=\mathrm{arccos}(w'\cdot w)$ is the angle between $w'$ and $w$.
We use $\log$ to denote $\log_2$ and $\ln$ to denote the natural logarithm.
%We also assume that the underlying distribution $\mathcal P_{X}$ is a uniform distribution over the unit ball in $\mathbb R^d$,
%which is a setting commonly studied in the active learning literature \cite{greedy-active,coarse-sample-complexity,perceptron-based-active}.

\paragraph{Tsybakov noise condition}
For the conditional label distribution $P_{Y|X}$,
we consider a noise model characterized by the Tsybakov low noise condition (TNC) along the optimal hyperplane.
Various forms of the TNC condition for the one-dimensional and multi-dimensional active learning 
are explored in \cite{tnc-1d,alt13,colt07-active,active-log-concave}
and have been increasingly popular in the active learning literature.
In this paper, we use the following version of the TNC condition:
there exists a constant $0<\mu<\infty$ such that
for all linear classifiers
\footnote{To simplify notations, we will interchangeably call $w$, $f$ and $\sgn(f)$ as linear classifiers.}
$w\in\mathbb R^d$, $\|w\|_2=1$ the following lower bound on excess risk holds:
\begin{equation}
\mu\cdot \theta(w,w^*)^{1/(1-\alpha)} \leq \err(w)-\err(w^*),
% \leq M\cdot \theta(w,w^*)^{1/(1-\alpha)},
\label{tnc-md}
\end{equation}
with $\alpha\in [0,1)$ a parameter characterizing the noise level in the underlying label distribution.

\paragraph{Stream-based selective sampling}\label{sec:sbss}
The stream-based selective sampling scheme was proposed in \cite{sbss90,generalization-active}.
Under the stream-based setting an algorithm has access to a stream of unlabeled data points
and can request labels of selected data points in a feedback-driven manner.
Formally speaking, a stream-based active learning algorithm operates in iterations and for iteration $t$ it does the following:
\begin{enumerate}
\item The algorithm obtains an unlabeled data point $x_t$, sampled from the marginal distribution $P_{X}$.
\item The algorithm then decides, based on previous labeled and unlabeled examples, whether to accept $x_t$ and request its label.
If a request is made, it obtains label $y_t$ sampled from the conditional distribution $p(\cdot|x_t)$.
\end{enumerate}
Finally, after a finite number of iterations the algorithm outputs a hypothesis $\widehat f(x)=\sgn(\widehat w\cdot x)$.
We use $\mathcal A_{d,T}^{\seq}$ to denote all stream-based selective sampling algorithms that operate on $\mathcal X=\mathcal S^d$
and make no more than $T$ label requests.

{
The stream-based selective sampling setting is slightly weaker than the pool based active learning setting considered in \cite{colt07-active,active-log-concave}.
For pool-based active learning, an algorithm has access to the entire pool $(x_1,x_2,\cdots)$ of unlabeled data before it makes any query requests.
%We use $\mathcal A_{d,T}^{\pool}$ to denote all pool-based active learning algorithms that operate on $\mathcal X=\mathcal S^d$ and make no more than $T$ label queries.
We remark that all margin-based active learning algorithms proposed in \cite{colt07-active,active-log-concave,margin-active-adversarial}
actually work under the stream-based setting.
}

\paragraph{Membership query synthesis}\label{sec:mqs}
An alternative active learning scenario is the synthetic query setting
under which an active learning algorithm is allowed to synthesize queries and ask an oracle to label them.
The setting is introduced in \cite{mqs88} and considered in \cite{tnc-1d,active-minimax,alt13}.
Formally speaking, a query synthesis active learning algorithm operates in iterations and for iteration $t$ it does the following:
\begin{enumerate}
\item The algorithm picks an arbitrary data point $x_t\in\mathcal X$, based on previous obtained labeled data.
\item The algorithm is returned with label $y_t$ sampled from the conditional distribution $p(\cdot|x_t)$.
\end{enumerate}
Finally, after $T$ iterations the algorithm outputs a hypothesis $\widehat f(x)=\sgn(\widehat w\cdot x)$,
where $T$ is the total number of label queries made.
We use $\mathcal A_{d,T}^{\prob}$ to denote all membership query algorithms that operate on $\mathcal X=\mathcal S^d$ and make no more than $T$ label queries.

{
We remark that the synthetic query setting is more powerful than stream-based selective sampling.
More specifically, we have the following proposition.
It can be proved by simple reductions and the proof is deferred to Appendix \ref{appsec:technical_prop} in \cite{mdactive-arxiv}.
\begin{prop}
Fix $d,T$.
For any marginal distribution $P_{X}$ and family of conditional label distributions $\mathcal P$ the following holds:
\begin{equation}
\inf_{A\in\mathcal A_{d,T}^{\prob}}{\sup_{P_{Y|X}\in\mathcal P}{\mathbb E[L(\widehat w,w^*)]}}
%\leq
%\inf_{A\in\mathcal A_{d,T}^{\pool}}{\sup_{P_{Y|X}\in\mathcal P}{\mathbb E[L(\widehat w,w^*)]}}
\leq \inf_{A\in\mathcal A_{d,T}^{\seq}}{\sup_{P_{Y|X}\in\mathcal P}{\mathbb E[L(\widehat w,w^*)]}},
\label{eq_pool_probe_reduction}
\end{equation}
where $L(\widehat w,w^*)=\err(\widehat w)-\err(w^*)$ is the excess risk of output hypothesis $\widehat w$.
\label{prop_pool_probe_reduction}
\end{prop}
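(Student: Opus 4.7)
The plan is to prove the inequality by a direct simulation argument: any stream-based algorithm can be implemented as a membership query algorithm on the same marginal $P_{X}$, so the minimax risk of the membership query class is at most that of the stream-based class. Concretely, fix any stream-based algorithm $A^{\seq}\in\mathcal A^{\seq}_{d,T}$ and construct $A^{\prob}\in\mathcal A^{\prob}_{d,T}$ by internally simulating $A^{\seq}$ as follows. Since the membership query algorithm is allowed to choose $x_t$ based on arbitrary internal randomness and $P_{X}$ is a fixed input to the problem, at each step the simulator can draw a fresh $\tilde x_t \sim P_{X}$ and present it to the copy of $A^{\seq}$. If $A^{\seq}$ declines to query $\tilde x_t$, the simulator consumes no budget and proceeds. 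If $A^{\seq}$ requests the label of $\tilde x_t$, the simulator submits $\tilde x_t$ as an actual membership query, receives $y_t\sim p(\cdot\mid \tilde x_t)$, and forwards it to $A^{\seq}$. When $A^{\seq}$ terminates, $A^{\prob}$ outputs its hypothesis $\widehat w$.

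Next I would verify two properties. (i) The sequence of labeled pairs seen by the simulated $A^{\seq}$ has exactly the same joint distribution as in a genuine stream-based run against $P_{XY}$: the $\tilde x_t$ are i.i.d.\ from $P_{X}$ by construction, and each queried point receives a label from the correct conditional $p(\cdot\mid\tilde x_t)$. (ii) $A^{\prob}$ uses at most $T$ membership queries whenever $A^{\seq}$ uses at most $T$ label requests, since a membership query is issued only when $A^{\seq}$ accepts. Together, (i) implies that the output $\widehat w$ of $A^{\prob}$ is distributed identically to the output of $A^{\seq}$ under the same $P_{XY}$, so $\mathbb E[L(\widehat w,w^*)]$ matches between the two. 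Taking $\sup_{P_{Y\mid X}\in\mathcal P}$ on both sides preserves equality, and then $\inf_{A\in\mathcal A^{\prob}_{d,T}}$ on the left-hand side gives the desired inequality once we also take $\inf_{A\in\mathcal A^{\seq}_{d,T}}$ on the right.

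There is no genuine mathematical obstacle here; the only thing requiring care is the bookkeeping around the fact that $A^{\seq}$ may inspect an unbounded number of unlabeled points before issuing its $T$ label requests. This is handled by observing that drawing unlabeled samples $\tilde x_t \sim P_{X}$ is free for the membership query algorithm, so the inner simulation can be run for as many rounds as $A^{\seq}$ requires without violating the query budget $T$. A minor subtlety is that $A^{\seq}$ might be randomized or might not terminate deterministically; both issues are resolved by allowing $A^{\prob}$ to share randomness with the simulated $A^{\seq}$ and by truncating the simulation once $A^{\seq}$ halts or exhausts its budget.
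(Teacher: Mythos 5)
Your proposal is correct and is essentially the same simulation argument as the paper's: build a membership-query algorithm that internally draws samples from $P_X$, feeds them to the stream-based algorithm, and issues an actual query only when the stream-based algorithm would request a label. You spell out the distributional-equivalence and budget-counting details more explicitly, but the underlying reduction is identical.
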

}

\section{Noise-adaptive upper bounds}\label{sec:upperbound}

In this section we prove the following main theorem, which provides an upper excess-risk bound on stream-based active learning algorithms
that adapt to different noise levels under the TNC condition.
\begin{thm}
Fix $\delta\in(0,1), r\in (0,1/2), d\geq 4$ and $T\geq 4$.
Suppose $P_{X}$ is the uniform distribution on the unit ball $\mathcal S^d$.
There exists a stream-based active learning algorithm $A\in\mathcal A_{d,T}^\seq$ such that
for any label distribution $P_{Y|X}$ that satisfies Eq. (\ref{tnc-md})
with parameters $\mu>0$ and $1/(1+\log(1/r))\leq\alpha < 1$, the following holds with probability $\geq 1-\delta$:
\begin{equation}
\err(\widehat w)-\err(w^*) = \widetilde O\left(\left(\frac{d+\log(1/\delta)}{T}\right)^{1/2\alpha}\right).
\label{eq_main}
\end{equation}
Here $\widehat w$ is the output decision hyperplane of $A$, $w^*$ is the Bayes classifier and in $\widetilde O(\cdot)$ we omit dependency on $r,\mu$ and 
polylogarithmic dependency on $T$ and $d$.
\label{thm_main}
\end{thm}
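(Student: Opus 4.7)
The plan is to analyze a margin-based active learner in the spirit of Balcan--Broder--Zhang, but with a schedule of margin widths, constraint radii, hinge-loss scales, and per-round query counts that depends only on $(r,d,T,\delta)$ rather than on the unknown TNC parameters $(\alpha,\mu)$. Concretely, the algorithm runs in phases $k=1,\ldots,K$; at phase $k$ it maintains a unit vector $w_k$, lets the stream flow and ignores any $x$ with $|w_k\cdot x|>b_k$, queries the labels of the first $m_k$ accepted points, and then computes $w_{k+1}$ as an approximate minimizer of an empirical hinge loss with scale $\tau_k$ over the constraint set $B_\theta(w_k,\beta_k)$. The geometric schedule is $b_k=r^k b_0$, $\beta_k=r^k\beta_0$, $\tau_k=\Theta(b_k)$, with $m_k$ polynomial in $d,k,\log(1/\delta)$; the output is $\widehat w=w_{K+1}$ for the largest $K$ consistent with the budget $T$.

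The backbone is an inductive claim: $\theta(w_k,w^*)\le\beta_k$ implies, with probability at least $1-\delta/K$, $\theta(w_{k+1},w^*)\le\beta_{k+1}=r\beta_k$. I would prove it by decomposing the excess risk of any candidate $w'\in B_\theta(w_k,\beta_k)$ into an \emph{out-of-slab} part on $\{|w_k\cdot x|>b_k\}$ and an \emph{in-slab} part on its complement. Because $\beta_k\ll b_k$, every $w'$ in the constraint ball agrees in sign with $w_k$, and by induction with $w^*$, outside the slab, so the out-of-slab contribution vanishes. Inside the slab, the algorithm reduces to empirical hinge-loss minimization over a thin $d$-dimensional linear class; a standard Rademacher / uniform-convergence argument gives excess hinge loss of order $b_k\sqrt{d/m_k}+\sqrt{\log(1/\delta)/m_k}$, which together with a hinge-to-$0/1$ comparison and the fact that the slab has mass $\Theta(b_k)$ under the uniform law on $\mathcal S^d$ yields the desired angular contraction by factor $r$.

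Chaining the inductive step over $K$ phases gives $\theta(\widehat w,w^*)\le\beta_0 r^K$; choosing $K$ so that $\sum_k m_k\le T$ (dominated up to polylogarithmic factors by $K\cdot d$) and invoking the TNC relation (\ref{tnc-md}) to convert this angular bound into excess risk delivers the claimed $\widetilde O((d/T)^{1/(2\alpha)})$ rate. The main technical obstacle is the in-slab step: one must exhibit a \emph{single} noise-oblivious schedule of $(b_k,\beta_k,\tau_k,m_k)$ so that the Rademacher/hinge-loss term dominates the TNC excess-risk floor at \emph{every} $\alpha$ in the stated range. The condition $\alpha\ge 1/(1+\log(1/r))$ is exactly the regime in which the geometric shrinkage factor $r$ is compatible with this bias-variance trade-off; at the boundary the Rademacher term matches the TNC floor, and any faster contraction of $b_k$ (smaller $r$) would break the induction at the minimal admissible $\alpha$. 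Careful bookkeeping of the polylogarithmic factors in $T$, $d$, $1/\delta$ and of the dependence on $r,\mu$ is then needed to recover (\ref{eq_main}) exactly.
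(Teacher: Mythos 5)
Your proposal correctly identifies the overall architecture (margin-based iterations, noise-oblivious geometric schedule in $\beta_k, b_k$, in-slab/out-of-slab decomposition, per-round uniform convergence) and it correctly flags the condition $\alpha\ge 1/(1+\log(1/r))$ as the regime where the schedule is compatible with the noise level. But the central engine of your argument --- an induction asserting that $\theta(w_k,w^*)\le\beta_k$ holds for \emph{every} phase $k$ --- is exactly the claim that \emph{cannot} be established when $\alpha,\mu$ are unknown, and this is the gap the paper's proof is built to get around. To maintain $w^*\in B_\theta(w_k,\beta_k)$ at phase $k+1$, one must show that the phase-$k$ excess risk $\beta_{k-1}\epsilon$ forces $\theta(\widehat w_k,w^*)\le r\beta_{k-1}$ via TNC, which requires roughly $\mu\ge r^{-1}\epsilon/\beta_{k}^{\alpha/(1-\alpha)}$. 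Since $\beta_k\downarrow 0$ geometrically, this inequality must eventually fail for any fixed $\mu$: there is a tipping point $k^*$ (Eq.\ (\ref{eq_case_mu}) in the appendix) past which $w^*$ drops out of the constraint ball $\mathcal F_k$ and your induction hypothesis is no longer available. Indeed, if the induction held for all $K=\tfrac12\log T$ phases you would obtain $\theta(\widehat w,w^*)\lesssim r^K=T^{-\frac12\log(1/r)}$, a rate \emph{faster} than $(d/T)^{1/2\alpha}$ for $\alpha$ near $1$, contradicting the paper's own lower bound.

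What is missing is precisely the post-$k^*$ analysis. The paper shows two things that your plan does not: (i) for $k\le k^*$ the invariant $w_k^*=w^*$ holds and the excess risk of $\widehat w_{k^*}$ is already $O(\epsilon^{1/\alpha}/(r^{(1+\alpha)/\alpha}\mu^{(1-\alpha)/\alpha}))$ (Lemmas \ref{lem1}--\ref{lem2}); and (ii) for $k>k^*$, even though $w^*\notin\mathcal F_k$, the per-phase excess risk of $\widehat w_{k+1}$ relative to $\widehat w_k$ is still bounded by $\beta_k\epsilon$ because the empirical minimizer is constrained to a ball of radius $\beta_k$, and the geometric sum $\sum_{k>k^*}\beta_k\epsilon\le\frac{r}{1-r}\beta_{k^*-1}\epsilon$ shows the later phases cannot hurt by more than a constant factor (Lemma \ref{lem3}). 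This ``even if the induction breaks, the search radius is too small to do damage'' step is the key idea you would need to add. One further, smaller point: TNC (\ref{tnc-md}) is a \emph{lower} bound on excess risk in terms of angle, so it is used to convert an excess-risk bound into an angle bound (as in Lemma \ref{lem2}), not, as your sketch says, to ``convert this angular bound into excess risk''; for the latter direction the paper relies on geometry of the uniform distribution (excess risk $\lesssim\theta$), not TNC. The hinge-versus-0/1 choice and the precise form of $m_k$ are cosmetic.
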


Theorem \ref{thm_main} shows one can achieve the same error rate (up to polylogarithmic factors) as previously proposed
algorithms \cite{colt07-active,active-log-concave} without knowing noise level in the label distribution (characterized by $\mu$ and $\alpha$).
To prove Theorem \ref{thm_main}, we explicitly construct an algorithm that is adaptive to unknown noise levels (Algorithm \ref{alg_main}).
The algorithm is in principle similar to the margin-based active learning algorithms proposed in \cite{colt07-active,active-log-concave},
with the setting of margin thresholds a slight generalization of \cite{margin-active-adversarial}.
However, we analyze it under the noise-adaptive TNC setting, which has not been considered before specifically for margin-based active learning algorithms.

%In the remainder of this section we first review properties of surrogate loss functions,
%which help us design efficient learning algorithms.
In the remainder of this section we describe the noise-adaptive algorithm we analyzed and provide a proof sketch for Theorem \ref{thm_main}.
Our analysis can be easily generalized to log-concave densities, with details in Appendix \ref{appsec:logconcave} in \cite{mdactive-arxiv}.

\begin{algorithm}[t]
\caption{A noise-adaptive margin-based active learning algorithm}
\begin{algorithmic}[1]
	\State \textbf{Parameters}: data dimension $d$, query budget $T$, failure probability $\delta$, shrinkage rate $r$.
	\State \textbf{Initialize}: $E = \frac1{2}\log T$,
		$n=T/E$, $\beta_0=\pi$, random $\widehat w_0$ with $\|\widehat w_0\|=1$. 
	\For{$k=1$ to $E$}
		\State $W=\emptyset$.  Set $b_{k-1} = \frac{2\beta_{k-1}}{\sqrt{d}}\sqrt{E(1+\log(1/r))}$ if $k>1$ and $b_{k-1} = +\infty$ if $k=1$.
		\While{$|W| < n$}
			\State Obtain a sample $x$ from $P_{X}$.
			\State If $|\widehat w_{k-1}\cdot x| > b_{k-1}$, reject; 
				otherwise, ask for the label of $x$, and put $(x,y)$ into $W$.
		\EndWhile
		\State Find $\widehat w_k\in B_{\theta}(\widehat w_{k-1}, \beta_{k-1})$ that minimizes the empirical 0/1 error
		  $\sum_{(x,y)\in W}{I[yw\cdot x<0]}$.
		\State Update: $\beta_k\gets r\cdot\beta_{k-1}$, $k\gets k+1$.
	\EndFor
	\State \textbf{Output}: the final estimate $\widehat w_E$.
\end{algorithmic}
\label{alg_main}
\end{algorithm}

\subsection{The algorithm}

We present Algorithm \ref{alg_main}, a margin-based active learning algorithm that adapts to unknown $\alpha$ and $\mu$ values
in the TNC condition in Eq.~(\ref{tnc-md}).
%The pseudocode is shown in Algorithm \ref{alg_main}.
Algorithm \ref{alg_main} admits 4 parameters: $d$ is the dimension of the instance space $\mathcal X$;
$T$ is the sampling budget (i.e., maximum number of label requests allowed);
$\delta$ is a confidence parameter;
$r\in (0,1/2)$ is the shrinkage rate of the hypothesis space for every iteration in the algorithm;
smaller $r$ allows us to adapt to smaller $\alpha$ values
but will result in a larger constant in the excess risk bound.
The basic idea of the algorithm is to split $T$ label requests into $E$ iterations,
using the optimal passive learning procedure within each iteration
and reducing the scope of search for the best classifier after each iteration.

The key difference between the adaptive algorithm and the one presented in \cite{colt07-active}
is that in Algorithm~\ref{alg_main} the number of iterations $E$ 
as well as other parameters (e.g., $b_k,\beta_k$)
are either not needed or do not depend on the noise level $\alpha$,
and the number of label queries is divided evenly across the iterations.
%Another difference is that we divide label requests evenly among all iterations,
%while in Algorithm~\ref{alg_nonadaptive} the number of label requests increases for later iterations.
Another difference is that in our algorithm the sample budget $T$ is fixed
while in previous work the error rate $\epsilon$ is known.
It remains an open problem whether there exists a tuning-free active learning algorithm
when a target error rate $\epsilon$ instead of query budget $T$ is given \cite{alt13}.
%Finally, instead of optimizing an empirical 0/1 error,
%Algorithm \ref{alg_main} works for any surrogate loss functions with properties (A1) and (A2).

\subsection{Proof sketch of Theorem \ref{thm_main}}\label{sec:sketch_upperbound}

In this section we sketch the proof of Theorem \ref{thm_main}.
The complete proof is deferred to Appendix \ref{appsec:proof_upperbound} in \cite{mdactive-arxiv}.

We start by defining some notations used in the proof.
Let $\mathcal F_k=B_{\theta}(\widehat w_{k-1},\beta_{k-1})$ be the hypothesis space considered in the $k$th iteration of Algorithm \ref{alg_main}.
Let $D_k$ be the obtained labeled examples and $S_1^{(k)}=\{x||\widehat w_{k-1}\cdot x|\leq b_{k-1}\}$ be the acceptance region at the $k$th iteration.
By definition, $D_k\subseteq S_1^{(k)}$.
Let $w_k^*=\argmax_{w\in\mathcal F_k}{\err(w|S_1^{(k)})}$ be the optimal classifier in $\mathcal F_k$ with respect to the generalization 0/1 loss in the acceptance region $S_1^{(k)}$.
Using similar techniques as in \cite{colt07-active}, 
it can be shown that 
%under surrogate loss properties (A1) and (A2), 
with probability $\geq 1-\delta$ the following holds:
\begin{equation}
\err(\widehat w_k)-\err(w_k^*) \leq \beta_{k-1}\epsilon,
\label{eq_opt_passive_maintext}
\end{equation}
where $\epsilon$ is of the order $\widetilde O(\sqrt{\frac{d+\log(1/\delta)}{T}})$.

Eq. (\ref{eq_opt_passive_maintext}) shows that if $\beta_{k-1}$ is small then we get good excess risk bound.
However, $\beta_{k-1}$ should be large enough so that $\mathcal F_k$ contains the Bayes classifier $w^*$ (i.e., $w_k^*=w^*$).
In previous analysis \cite{colt07-active,active-log-concave} the algorithm parameters $\beta_{k-1}$ and $b_{k-1}$
are carefully selected using the knowledge of $\alpha$ and $\mu$ so that $w_k^*=w^*$ for all iterations.
This is no longer possible under our setting because the noise parameters $\alpha$ and $\mu$ are unknown.
Instead, we show that there exists a ``tipping point" $k^*\in\{1,2,\cdots,E-1\}$ depending and $\alpha$ and $\mu$ that divides Algorithm \ref{alg_main} into two phases:
in the first phase ($k\leq k^*$) everything behaves the same with previous analysis for non-adaptive margin-based algorithm;
that is, we have per-iteration excess error upper bounded by Eq. (\ref{eq_opt_passive_maintext}) and the optimal Bayes classifier $w^*$ is contained in the constrained hypothesis space
$\mathcal F_k$ (i.e., $w_k^*=w^*$) for all $k\leq k^*$.
Formally speaking, we have the following two lemmas which are proved in Appendix \ref{appsec:proof_upperbound} in \cite{mdactive-arxiv}.
\begin{lem}
Suppose $r\in (0,1/2)$ and $1/(1+\log(1/r))\leq \alpha < 1$.
With probability at least $1-\delta$,
%{ [don't need union bound here because we are only bounding one particular iteration]}
% under property (A1)
\begin{equation}
\err(\widehat w_{k^*})-\err(w_{k^*}^*) 
\leq  \beta_{k^*-1}\epsilon 
\leq \frac{\epsilon^{1/\alpha}}{r^{\frac{1+\alpha}{\alpha}}\mu^{\frac{1-\alpha}{\alpha}}}.
\label{eq_lem1}
\end{equation}
\label{lem1}
\end{lem}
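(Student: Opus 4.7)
The plan is to pin down the ``tipping point'' $k^*$ explicitly and then chain together (\ref{eq_opt_passive_maintext}) with the TNC condition (\ref{tnc-md}). Define $k^*$ to be the largest index in $\{1,\dots,E-1\}$ for which $w^*\in \mathcal F_{k^*}$, i.e.\ $\theta(\widehat w_{k^*-1},w^*)\le\beta_{k^*-1}$; this forces $w_{k^*}^*=w^*$, so that the passive-learning bound (\ref{eq_opt_passive_maintext}) delivers the first inequality $\err(\widehat w_{k^*})-\err(w_{k^*}^*)\le \beta_{k^*-1}\epsilon$ directly. Before proceeding further, one verifies that such a $k^*$ is well defined: the base case $k=1$ holds because $\beta_0=\pi$, and if $w^*$ stayed in $\mathcal F_k$ for every $k\le E$ then the desired bound would be even easier, being obtained by applying (\ref{eq_opt_passive_maintext}) at the terminal iteration. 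The condition $\alpha\ge 1/(1+\log(1/r))$ should enter precisely to guarantee a non-trivial $k^*<E$ in the regime of interest, i.e.\ that $r$ is small enough for the shrinkage to eventually outrun what TNC gives us.

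For the second inequality, I would exploit the definition of $k^*$ to extract a quantitative upper bound on $\beta_{k^*-1}$. Because $w^*\notin \mathcal F_{k^*+1}=B_{\theta}(\widehat w_{k^*},\beta_{k^*})$ by maximality, we have $\theta(\widehat w_{k^*},w^*)>\beta_{k^*}=r\beta_{k^*-1}$. On the other hand, $w_{k^*}^*=w^*$ combined with the per-iteration bound gives $\err(\widehat w_{k^*})-\err(w^*)\le \beta_{k^*-1}\epsilon$, and plugging this into (\ref{tnc-md}) produces $\theta(\widehat w_{k^*},w^*)\le (\beta_{k^*-1}\epsilon/\mu)^{1-\alpha}$. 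Combining the two yields the scalar inequality $r\beta_{k^*-1}<(\beta_{k^*-1}\epsilon/\mu)^{1-\alpha}$, which rearranges to $\beta_{k^*-1}^{\alpha}<r^{-1}(\epsilon/\mu)^{1-\alpha}$ and hence $\beta_{k^*-1}<r^{-1/\alpha}(\epsilon/\mu)^{(1-\alpha)/\alpha}$. Multiplying by $\epsilon$ and using $r\in(0,1/2)$ so that $r^{-1/\alpha}\le r^{-(1+\alpha)/\alpha}$ yields exactly $\beta_{k^*-1}\epsilon\le \epsilon^{1/\alpha}/(r^{(1+\alpha)/\alpha}\mu^{(1-\alpha)/\alpha})$.

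The main obstacle is setting up the high-probability event so that (\ref{eq_opt_passive_maintext}) is usable simultaneously at every iteration traversed before $k^*$: a union bound across the $E=\tfrac{1}{2}\log T$ iterations forces $\delta\mapsto \delta/E$ in the per-iteration bound, which only costs an extra $\log\log T$ inside $\epsilon$ and is absorbed into the $\widetilde O(\cdot)$. A slightly subtler point is that the chain from (\ref{eq_opt_passive_maintext}) to $\theta(\widehat w_k,w^*)\le(\beta_{k-1}\epsilon/\mu)^{1-\alpha}$ implicitly uses the identity $w_k^*=w^*$; hence one has to argue inductively that this identity persists for all $k\le k^*$, which is essentially the same induction that makes $k^*$ the last ``good'' iteration and also guarantees that $b_{k-1}$ is large enough for the acceptance region to capture enough mass for the passive-learning bound to hold. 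Once these book-keeping issues are dispatched, the short calculation above is the core of the argument.
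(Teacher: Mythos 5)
Your approach diverges from the paper's in one structurally important way: you \emph{redefine} $k^*$ as a random stopping time (the last round at which $w^*\in\mathcal F_k$), whereas the paper defines $k^*$ \emph{deterministically} from the noise parameter $\mu$ via the bracketing relation
$r^{-1}\epsilon/\beta_{k^*}^{\alpha/(1-\alpha)}\leq\mu\leq r^{-1}\epsilon/\beta_{k^*+1}^{\alpha/(1-\alpha)}$ (Eq.~(\ref{eq_case_mu})). With the paper's $k^*$ the second inequality in (\ref{eq_lem1}) is pure algebra: $\beta_{k^*-1}=r^{-2}\beta_{k^*+1}\le r^{-2}(\epsilon/(r\mu))^{(1-\alpha)/\alpha}$, multiply by $\epsilon$, done. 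Nothing probabilistic, no TNC, no $w_{k^*}^*=w^*$. Your mechanism --- combining the TNC bound $\theta(\widehat w_{k^*},w^*)\le(\beta_{k^*-1}\epsilon/\mu)^{1-\alpha}$ with the maximality inequality $\theta(\widehat w_{k^*},w^*)>\beta_{k^*}$ --- happens to give a slightly tighter constant ($r^{-1/\alpha}$ vs.\ $r^{-(1+\alpha)/\alpha}$), but it makes the second inequality itself a high-probability statement depending on the per-round sample-complexity bound and on $w_{k^*}^*=w^*$, which the paper keeps deterministic.

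There are two genuine gaps in the proposal. First, since your $k^*$ is random, you cannot apply Corollary~\ref{cor_opt_passive} at round $k^*$ without a union bound over all $E$ rounds; the paper's fixed $k^*$ lets it invoke Corollary~\ref{cor_opt_passive} once, which is why the lemma reads ``with probability $\ge 1-\delta$'' rather than $1-\delta E$. Second, your edge case (``if $w^*$ stays in $\mathcal F_k$ for all $k\le E$'') is waved away as ``even easier,'' but there the maximality argument gives no bound on $\beta_{k^*-1}$; closing it actually requires the very inequality $\mu\le r^{-1}\epsilon/\beta_E^{\alpha/(1-\alpha)}$ (Eq.~(\ref{eq_mu_upperbound})) that the condition $\alpha\ge 1/(1+\log(1/r))$ is there to secure --- which is precisely the case analysis the paper uses to \emph{define} $k^*$ in the first place. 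More fundamentally, redefining $k^*$ breaks the downstream use: Lemma~\ref{lem2} becomes a different (and non-trivial, since the $\mathcal F_k$ are not nested) claim, and Lemma~\ref{lem3} and the final assembly in the main theorem are all written against the deterministic $k^*$. So while your calculation is locally sound, it proves a variant of the lemma about a different quantity rather than the lemma as used.
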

\begin{lem}
With probability $\geq 1-\delta E$, %under properties (A1) and (A2), 
$w_k^* = w^*$ for all $k\leq k^*$.
\label{lem2}
\end{lem}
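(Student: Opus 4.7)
The natural approach is induction on $k$ combined with a union bound over iterations. The base case $k=1$ is immediate: since $\beta_0 = \pi$, the hypothesis space $\mathcal{F}_1 = B_\theta(\widehat w_0, \pi)$ contains every unit vector, so $w^* \in \mathcal{F}_1$ and hence $w_1^* = w^*$ trivially (as $w^*$ is the unconstrained minimizer of $\err$).

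For the inductive step, I would assume $w_{k'}^* = w^*$ for all $k' \leq k$ with $k < k^*$, and show that $\theta(\widehat w_k, w^*) \leq \beta_k = r\beta_{k-1}$, which gives $w^* \in \mathcal{F}_{k+1}$ and therefore $w_{k+1}^* = w^*$. By the inductive hypothesis $w_k^* = w^*$, so Eq.~(\ref{eq_opt_passive_maintext}) applied at step $k$ yields $\err(\widehat w_k) - \err(w^*) \leq \beta_{k-1}\epsilon$ with probability $\geq 1-\delta$. Plugging this into the TNC condition (Eq.~(\ref{tnc-md})) converts the excess risk bound into an angle bound:
\[
\theta(\widehat w_k, w^*) \leq \left(\frac{\beta_{k-1}\epsilon}{\mu}\right)^{1-\alpha}.
\]
It remains to verify that this right-hand side is at most $r\beta_{k-1}$, which rearranges to $\beta_{k-1}^{\alpha} \geq \epsilon^{1-\alpha}/(r\mu^{1-\alpha})$. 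The tipping point $k^*$ is defined precisely as the largest iteration at which this inequality still holds; unrolling the recursion $\beta_{k-1} = r^{k-1}\pi$ then pins down $k^*$ quantitatively in terms of $\alpha$, $\mu$, $\epsilon$, and $r$, and the hypothesis $\alpha \geq 1/(1+\log(1/r))$ is what guarantees that $k^* \geq 1$ so that the phase is nontrivial.

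A union bound over the (at most $E$) iterations converts the per-iteration failure probability $\delta$ in Eq.~(\ref{eq_opt_passive_maintext}) into the overall failure probability $\delta E$ stated in the lemma. The main subtlety I expect is justifying the use of the \emph{unconditional} excess risk in the TNC step: Eq.~(\ref{eq_opt_passive_maintext}) is proved by controlling disagreement on samples drawn from the acceptance band $S_1^{(k)}$, so one must first argue that any $w \in \mathcal{F}_k$ agrees with $\widehat w_{k-1}$, and hence (by the inductive hypothesis) with $w^*$, outside $S_1^{(k)}$. This localization of disagreement is the standard margin-based trick and is controlled by the choice $b_{k-1} = \frac{2\beta_{k-1}}{\sqrt d}\sqrt{E(1+\log(1/r))}$; once it is in place, the induction closes cleanly and the two lemmas combine to give the noise-adaptive rate of Theorem~\ref{thm_main}.
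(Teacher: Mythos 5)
Your proposal is correct and follows essentially the same route as the paper's proof: induction on $k$ with base case $\beta_0 = \pi$, applying Corollary~\ref{cor_opt_passive} (the per-iteration excess risk bound) combined with the TNC condition to translate excess risk into an angle bound $\theta(\widehat w_k, w^*) \leq (\beta_{k-1}\epsilon/\mu)^{1-\alpha}$, verifying this is at most $\beta_k$ using the defining inequality Eq.~(\ref{eq_case_mu}) for $k^*$, and union-bounding over iterations. Your rearranged condition $\beta_{k-1}^\alpha \geq \epsilon^{1-\alpha}/(r\mu^{1-\alpha})$ is algebraically equivalent to the paper's $r^{-1}\epsilon/\beta_k^{\alpha/(1-\alpha)} \leq \mu$, and the subtlety you flag about localizing disagreement outside the acceptance band is exactly what Lemma~\ref{lem_opt_passive} handles via the Balcan et al.\ margin argument.
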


After iteration $k^*$, the optimal Bayes classifier $w^*$ diverges from $w_k^*$ and we can no longer apply Eq. (\ref{eq_opt_passive_maintext}) directly
to bound the excess risk between $\widehat w_k$ and $w^*$.
However, for $k>k^*$ the constrained hypothesis space $\mathcal F_k$ is quite small and the empirical estimator $\widehat w_k$ cannot deviate much from $\widehat w_{k-1}$.
In particular, we have the following lemma, which is proved in Appendix \ref{appsec:proof_upperbound} in \cite{mdactive-arxiv}.
\begin{lem}
Suppose $r\in(0,1/2)$. With probability at least $1-\delta E$, we have
\begin{equation}
\err(\widehat w_E)-\err(\widehat w_{k^*}) \leq \frac{r}{1-r} \beta_{k^*-1}\epsilon. 
\end{equation}
\label{lem3}
\end{lem}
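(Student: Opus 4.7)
The plan is to prove Lemma 3 by exploiting a key structural observation: for every $k > k^*$, the previous estimate $\widehat w_{k-1}$ lies trivially in the constrained hypothesis space $\mathcal F_k = B_\theta(\widehat w_{k-1}, \beta_{k-1})$, so in particular $w_k^*$ (the minimizer of $\err(\cdot\,|\,S_1^{(k)})$ over $\mathcal F_k$) satisfies $\err(w_k^*\,|\,S_1^{(k)}) \leq \err(\widehat w_{k-1}\,|\,S_1^{(k)})$. Combining this with the margin-band property of $b_{k-1}$---the same property that drives the proof of Lemma 1, namely that every hypothesis in $\mathcal F_k$ agrees with $\widehat w_{k-1}$ on all $x$ with $|\widehat w_{k-1}\cdot x| > b_{k-1}$ (with high probability under the uniform distribution on $\mathcal S^d$)---the outside-of-$S_1^{(k)}$ contributions to the two classifiers' generalization errors cancel, and hence $\err(w_k^*) \leq \err(\widehat w_{k-1})$.

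Given that, I plug into the per-iteration ERM bound from Lemma 1. Crucially, Eq.~(\ref{eq_opt_passive_maintext}) is an empirical-risk-minimization guarantee within the constrained class $\mathcal F_k$ relative to the class-optimal $w_k^*$; it does not require $w^* \in \mathcal F_k$ and so continues to apply for $k > k^*$. This yields
\[
\err(\widehat w_k) - \err(\widehat w_{k-1}) \;\leq\; \err(\widehat w_k) - \err(w_k^*) \;\leq\; \beta_{k-1}\epsilon
\]
for each $k > k^*$. Telescoping from $k = k^*+1$ to $k = E$ and substituting $\beta_{k-1} = r^{\,k-k^*}\beta_{k^*-1}$ turns the right-hand side into a geometric sum,
\[
\err(\widehat w_E) - \err(\widehat w_{k^*}) \;\leq\; \epsilon\sum_{k=k^*+1}^{E}\beta_{k-1} \;=\; \epsilon\,\beta_{k^*-1}\sum_{j=1}^{E-k^*} r^{\,j} \;\leq\; \frac{r}{1-r}\,\beta_{k^*-1}\epsilon,
\]
which is the claimed bound. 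The factor-$E$ loss in the failure probability comes from a union bound over the at most $E$ iterations to which Eq.~(\ref{eq_opt_passive_maintext}) and the margin-band event are applied.

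The main technical obstacle is the \emph{margin-band agreement} claim used to justify $\err(w_k^*) \leq \err(\widehat w_{k-1})$: we need every $w \in \mathcal F_k$ to agree in sign with $\widehat w_{k-1}$ on the complement of $S_1^{(k)}$. This is precisely where the specific choice $b_{k-1} = \tfrac{2\beta_{k-1}}{\sqrt{d}}\sqrt{E(1 + \log(1/r))}$ enters: a standard Gaussian-style concentration bound for $|(\widehat w_{k-1} - w)\cdot x|$ when $x$ is uniform on $\mathcal S^d$ and $\theta(w, \widehat w_{k-1}) \leq \beta_{k-1}$ yields $|(\widehat w_{k-1} - w)\cdot x| < b_{k-1}$ with the desired probability, forcing $w\cdot x$ and $\widehat w_{k-1}\cdot x$ to share a sign whenever $|\widehat w_{k-1}\cdot x| > b_{k-1}$. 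Since this is the same concentration estimate already required by the proof of Lemma 1, Lemma 3 ultimately reduces to recycling that ingredient and then carrying out the telescoping above.
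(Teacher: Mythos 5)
Your plan has the right skeleton --- telescope $\err(\widehat w_E) - \err(\widehat w_{k^*})$ over the rounds $k > k^*$, observe that $\widehat w_k$ lies trivially in $\mathcal F_{k+1}$ so that $w_{k+1}^*$ dominates it on the band $S_1^{(k)}$, invoke the margin-band concentration estimate for the complement, and sum the geometric series with a union bound over rounds --- and these are precisely the ingredients the paper uses. The gap is in the shortcut you try to take through the intermediate claim $\err(w_k^*) \leq \err(\widehat w_{k-1})$, which is not actually true.

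The margin-band estimate does not say that every $w \in \mathcal F_k$ agrees in sign with $\widehat w_{k-1}$ on all of $S_2^{(k)}$; it only says that the probability mass of the sign-disagreement region inside $S_2^{(k)}$ is small (Lemma 7 of \cite{colt07-active}, or Lemma 21 of \cite{active-log-concave} in the log-concave case). Geometrically, for $w$ at angle $\beta_{k-1}$ from $\widehat w_{k-1}$ the disagreement wedge contains points with $|\widehat w_{k-1}\cdot x|$ up to about $\sin\beta_{k-1}$, whereas $b_{k-1}$ scales like $\beta_{k-1}/\sqrt{d}$ up to logs; for even moderately large $d$ the wedge pokes well past the band. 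You acknowledge this in the parenthetical ``with high probability,'' but the next sentence treats it as exact agreement: ``the outside-of-$S_1^{(k)}$ contributions \ldots cancel.'' They cancel only up to an additive $\Pr[(\widehat w_{k-1}\cdot x)(w_k^*\cdot x)<0,\ x\in S_2^{(k)}]$, so all you can actually conclude is $\err(w_k^*) \leq \err(\widehat w_{k-1}) + \Pr[\text{disagreement on } S_2^{(k)}]$, and your chain $\err(\widehat w_k) - \err(\widehat w_{k-1}) \leq \err(\widehat w_k) - \err(w_k^*)$ drops this term without justification.

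The paper's proof avoids isolating $\err(w_k^*) - \err(\widehat w_{k-1})$ at all. It writes each telescoped increment $\err(\widehat w_{k+1}) - \err(\widehat w_k)$ directly as its $S_1^{(k)}$ and $S_2^{(k)}$ pieces, applies the optimality of $w_{k+1}^*$ only to the $S_1^{(k)}$ piece (exactly the legitimate half of your argument, since $\widehat w_k \in \mathcal F_{k+1}$), and keeps the $S_2^{(k)}$ piece explicitly as the disagreement probability $\Pr[(\widehat w_k\cdot x)(\widehat w_{k+1}\cdot x)<0,\ x\in S_2^{(k)}]$. The whole summand is then bounded by $\beta_k\epsilon$ using the same two estimates that give Corollary \ref{cor_opt_passive}, and your telescoping and geometric sum go through verbatim from there. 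So the fix is small --- replace the false pointwise inequality with the direct $S_1/S_2$ decomposition and carry the disagreement term through the $\beta_k\epsilon$ accounting --- but as written the proposal does contain an unsound step.
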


Combining Lemma \ref{lem1},\ref{lem2} and \ref{lem3} we can upper bound the excess risk $\err(\widehat w_E)-\err(w^*)$
by $\widetilde O(\epsilon^{1/\alpha})$, which corresponds to $\widetilde O((\frac{d+\log(1/\delta)}{T})^{1/2\alpha})$ in Eq. (\ref{eq_main}).
The complete proof is deferred to Appendix \ref{appsec:proof_upperbound} in \cite{mdactive-arxiv}.

\subsection{Extension to log-concave densities}\label{sec:logconcave}

Following recent developments in margin-based active learning \cite{colt07-active,active-log-concave},
Theorem \ref{thm_main} can be further generalized to the case when the data distribution $P_X$ has \emph{log-concave densities},
which includes the uniform data distribution.
A density function $g$ is said to be \emph{log-concave} if $\log g(\cdot)$ is a concave function.
Many popular distributions have log-concave densities, including Gaussian distribution and uniform distribution.
We say the data distribution $P_{X}$ is \emph{isotropic}
if the mean of $P_X$ is zero and the covariance matrix of $P_{X}$ is the identity.
Theorem \ref{thm_main_logconcave} shows that, with slight modifications, Algorithm \ref{alg_main} can be generalized
to the case when the data distribution $P_{X}$ is log-concave and isotropic.
Its proof is similar to the one in \cite{active-log-concave} and is deferred to Appendix \ref{appsec:logconcave} in \cite{mdactive-arxiv}.
\begin{thm}
Fix $\delta\in(0,1), r\in (0,1/2), d\geq 4$ and $T\geq 4$.
Suppose $P_{X}$ is an isotropic log-concave distribution on the unit ball $\mathcal S^d$
and $P_{Y|X}$ satisfies Eq. (\ref{tnc-md}) with parameters $\mu>0$ and $1/(1+\log(1/r))\leq\alpha < 1$
Let $\widehat w$ be the output of Algorithm \ref{alg_main} run with $b_{k-1}=C_1\beta_{k-1}\log T$ and the other parameters unchanged.
\footnote{$C_1$ is an absolute constant. See Lemma \ref{lem_marginerror_logconcave}, \ref{lem_opt_passive_logconcave} in Appendix \ref{appsec:logconcave}  
and Theorem 8 in \cite{active-log-concave} for details.}
Then with probability at least $1-\delta$ the following holds:
\begin{equation}
\err(\widehat w)-\err(w^*) = \widetilde O\left(\left(\frac{d+\log(1/\delta)}{T}\right)^{1/2\alpha}\right).
\label{eq_main}
\end{equation}
Here $\widehat w$ is the output decision hyperplane of $A$, $w^*$ is the Bayes classifier and in $\widetilde O(\cdot)$ we omit dependency on $r,\mu$ and 
polylogarithmic dependency on $T$ and $d$.
\label{thm_main_logconcave}
\end{thm}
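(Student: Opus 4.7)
The plan is to mirror step by step the tipping-point argument sketched in Section~\ref{sec:sketch_upperbound} for Theorem~\ref{thm_main}, substituting the three measure-theoretic inputs that relied on the uniform distribution on $\mathcal S^d$ by their isotropic log-concave analogs. The high-level skeleton---defining an index $k^*\in\{1,\ldots,E-1\}$ that splits Algorithm~\ref{alg_main} into a ``shrinking'' phase ($k\leq k^*$) where the Bayes hyperplane still lies inside $\mathcal F_k$ and the per-iteration bound (\ref{eq_opt_passive_maintext}) applies, and a ``frozen'' phase ($k>k^*$) where $\widehat w_k$ barely moves because $\mathcal F_k$ is small---carries over verbatim. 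What must be rederived are the three ingredients feeding Lemmas~\ref{lem1}, \ref{lem2}, \ref{lem3}: (a) the probability mass of the band $S_1^{(k)}=\{x:|\widehat w_{k-1}\cdot x|\leq b_{k-1}\}$, (b) the probability that some $w\in\mathcal F_k$ disagrees with $\widehat w_{k-1}$ outside $S_1^{(k)}$, and (c) the generalization bound for band-restricted empirical risk minimization.

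The new threshold $b_{k-1}=C_1\beta_{k-1}\log T$ reflects precisely how these inputs change. For an isotropic log-concave $P_X$ the marginal in any unit direction is itself a one-dimensional isotropic log-concave density, whose density is upper-bounded by an absolute constant and whose tails decay exponentially; hence $\Pr(|\widehat w_{k-1}\cdot x|\leq b_{k-1})=\Theta(b_{k-1})$, in contrast to the $\Theta(b_{k-1}\sqrt d)$ scaling on the sphere, which explains the removal of the $1/\sqrt d$ factor. For input (b), I would apply the band-vs.-angle bound from \cite{active-log-concave}: any $w\in B_\theta(\widehat w_{k-1},\beta_{k-1})$ disagrees with $\widehat w_{k-1}$ on $x$ only when $|\widehat w_{k-1}\cdot x|$ is comparable to $\beta_{k-1}\|x\|$, so choosing $C_1$ sufficiently large forces the outside-band disagreement probability to be $O(T^{-c})$, a negligible $\widetilde O(\cdot)$ overhead. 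For input (c), a uniform-convergence argument over the restricted class $\mathcal F_k$ (whose VC dimension is at most $d$) on the conditional distribution $P_X(\cdot\mid S_1^{(k)})$, together with the $\Theta(b_{k-1})$ band mass, reproduces the log-concave analog of (\ref{eq_opt_passive_maintext}): $\err(\widehat w_k)-\err(w_k^*)\leq \beta_{k-1}\epsilon$ with $\epsilon=\widetilde O(\sqrt{(d+\log(1/\delta))/T})$, up to polylogarithmic factors absorbed into $\widetilde O(\cdot)$.

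With these three ingredients in hand, the tipping-point argument is invoked essentially unchanged. Define $k^*$ as the largest iteration for which $\beta_{k^*-1}\geq(\epsilon/\mu)^{(1-\alpha)/\alpha}/r$; by TNC (\ref{tnc-md}) and the definition of $\mathcal F_k$ this guarantees $w^*\in\mathcal F_k$ for all $k\leq k^*$, so the log-concave analogs of Lemmas~\ref{lem1} and \ref{lem2} follow by the same calculation. For $k>k^*$, the angular constraint $\theta(\widehat w_k,\widehat w_{k-1})\leq\beta_{k-1}$ together with the geometric shrinkage $\beta_k=r\beta_{k-1}$ telescopes with the per-iteration bound of input (c) to give $\err(\widehat w_E)-\err(\widehat w_{k^*})\leq\frac{r}{1-r}\beta_{k^*-1}\epsilon$, the analog of Lemma~\ref{lem3}. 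Summing yields $\err(\widehat w_E)-\err(w^*)=\widetilde O(\epsilon^{1/\alpha})=\widetilde O(((d+\log(1/\delta))/T)^{1/2\alpha})$ as claimed.

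The main obstacle is establishing inputs (a) and (b) with constants sharp enough that the single $\log T$ factor in $b_{k-1}$ is actually sufficient. Unlike the spherical case, where the band mass is an elementary spherical-cap computation, the log-concave case demands the upper-density bound for one-dimensional marginals of isotropic log-concave distributions together with a Lov\'asz--Vempala style tail bound, invoked carefully enough that the resulting overhead in (c) is only polylogarithmic in $T$ and $d$. Once those two geometric lemmas are in place---essentially Lemmas~\ref{lem_marginerror_logconcave} and \ref{lem_opt_passive_logconcave} referenced in the footnote, which adapt tools already developed (for the non-adaptive setting) in \cite{active-log-concave}---the noise-adaptive analysis of Theorem~\ref{thm_main} transfers through unchanged, since it interacts with $P_X$ only via these inputs.
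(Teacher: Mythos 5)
Your proposal is correct and follows essentially the same route as the paper: rederive the per-iteration excess-risk bound (Corollary~\ref{cor_opt_passive}'s analog, i.e.\ Lemma~\ref{lem_opt_passive_logconcave}) using the isotropic log-concave margin-mass and outside-band disagreement estimates from \cite{active-log-concave}, then observe that the tipping-point machinery of Lemmas~\ref{lem1}--\ref{lem3} interacts with $P_X$ only through that per-iteration bound and hence transfers unchanged. The three measure-theoretic inputs you isolate (band mass $\Theta(b_{k-1})$, exponentially small outside-band disagreement with $b_{k-1}=C_1\beta_{k-1}\log T$, and band-conditional uniform convergence) are precisely the ones the paper invokes via Theorem~21 and Lemma~2 of \cite{active-log-concave}.
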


\section{Lower bounds}

We prove lower bounds for active learning under the membership query synthesis setting.
Since the query synthetic setting is more powerful than the stream-based setting as shown in Proposition \ref{prop_pool_probe_reduction},
our result implies a lower bound for stream-based selective sampling.
Our lower bound for membership query synthesis setting is for a slightly different version of TNC, 
which implies TNC in Eq. (\ref{tnc-md}) for distributions that are bounded from below (including the uniform distribution).
This shows that both Algorithm \ref{alg_main} and previous margin-based algorithms \cite{colt07-active}
achieve the minimax rate (up to polylogarithmic factors) under the uniform distribution on the unit ball.

To facilitate our analysis for the query synthesis setting, in this section we adopt a new formulation of TNC condition
in terms of the label distribution function $\eta(x)=\Pr(y=1|x)$.
Formally speaking, we assume that there exist constants $0<\mu_0<\infty$ and $\alpha\in[0,1)$ such that for all $x\in X$
the following holds:
\begin{equation}
\mu_0\cdot |\varphi(x,w^*)|^{\alpha/(1-\alpha)} \leq \big|\eta(x)-1/2\big|,
\label{tnc-angle}
\end{equation}
where $w^*$ is the Bayes classifier with respect to $\eta(\cdot)$ and $\varphi(x,w^*):=\frac{\pi}{2}-\theta(x,w^*)\in[-\frac{\pi}{2},\frac{\pi}{2}]$
is the signed acute angle between $x$ and the decision hyperplane associated with $w^*$.
Similar formulation was also used in \cite{tnc-1d,active-minimax,alt13} to analyze active learning algorithms under the query synthesis setting.
We also remark that Eq. (\ref{tnc-angle}) implies the excess-risk based TNC condition in Eq. (\ref{tnc-md})
for data distributions with densities bounded from below, as shown in Proposition \ref{prop_regression-excess-TNC}.
Its proof is deferred to Appendix \ref{appsec:technical_prop} in \cite{mdactive-arxiv}.
\begin{prop}
Suppose the density function $g$ associated with the marginal data distribution $P_{X}$ is bounded from below.
That is, there exists a constant $\gamma\in(0,1)$ such that $g\geq \gamma g_0$,
where $g_0\equiv \pi^{-d/2}\Gamma(1+d/2)$ is the uniform distribution on the unit $d$-dimensional ball.
Then Eq. (\ref{tnc-angle}) implies Eq. (\ref{tnc-md}) with $\mu =2(1-\alpha)\mu_0\gamma$.
\label{prop_regression-excess-TNC}
\end{prop}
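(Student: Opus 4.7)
The plan is to derive the excess-risk TNC (\ref{tnc-md}) from the pointwise TNC (\ref{tnc-angle}) by integrating pointwise over the disagreement region. Starting from the standard identity
\[
\err(w) - \err(w^*) = 2\int_{D(w, w^*)} |\eta(x) - 1/2|\, g(x)\, dx,
\]
where $D(w, w^*) = \{x : \sgn(w\cdot x)\neq \sgn(w^*\cdot x)\}$ is the ``double wedge'' between the two hyperplanes, I apply $g \geq \gamma g_0$ pointwise and the noise condition (\ref{tnc-angle}) to reduce matters to the purely geometric inequality
\[
\int_{D(w, w^*)} |\varphi(x, w^*)|^{\alpha/(1-\alpha)}\, dx \;\geq\; (1-\alpha)\, V_d\, \theta^{1/(1-\alpha)},
\]
where $\theta := \theta(w, w^*)$ and $V_d = 1/g_0$ is the volume of the unit $d$-ball.

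For this inequality, the key observation is that both $D(w, w^*)$ and the integrand $|\varphi(x, w^*)|^{\alpha/(1-\alpha)}$ are invariant under the radial rescaling $x \mapsto c\, x$ and under rotations fixing the plane spanned by $w$ and $w^*$. I therefore decompose $x = r\hat x$ with $r \in [0, 1]$ and $\hat x \in S^{d-1}$, pulling out $\int_0^1 r^{d-1}\, dr = 1/d$, and then parameterize $\hat x = \sin(\varphi)\, w^* + \cos(\varphi)\, \omega$ with signed latitude $\varphi \in [-\pi/2, \pi/2]$ and azimuth $\omega \in S^{d-2}$ in the orthogonal complement of $w^*$. In these coordinates the disagreement region becomes $|\varphi|\leq \theta$ together with a sign condition on the component of $\omega$ along the direction $(w - (w\cdot w^*)\,w^*)/\|w - (w\cdot w^*)\,w^*\|$. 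The substitution $\varphi = t\theta$ then factors out exactly $\theta^{1/(1-\alpha)}$ (since $\alpha/(1-\alpha) + 1 = 1/(1-\alpha)$), and the elementary $t$-integral $\int_0^1 t^{\alpha/(1-\alpha)}\, dt = 1-\alpha$ produces the $(1-\alpha)$ prefactor in the final answer.

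The main obstacle is the careful bookkeeping of the remaining geometric factors---the Jacobian $\cos^{d-2}(\varphi)$, the area of the admissible spherical cap on $S^{d-2}$, and the radial $1/d$---so that together with the normalization $g_0 = 1/V_d$ they combine cleanly. Elementary bounds such as $\arcsin(t) \geq t$ for $t \in [0,1]$, used to pass from $|\varphi| = |\arcsin(w^*\cdot \hat x)|$ to a linear expression in the $w^*$-coordinate, and $\cos(\varphi) \leq 1$, are useful here but must be deployed to preserve tightness in the dominant small-$\theta$ regime. The factors $\gamma$ and $\mu_0$ contributed by the two pointwise substitutions in the first step, the $(1-\alpha)$ from the $\varphi$-integral, and the factor of $2$ from $|2\eta - 1|$ then assemble to the claimed constant $\mu = 2(1-\alpha)\mu_0\gamma$.
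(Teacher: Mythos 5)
Your route is the same as the paper's: write the excess risk as $2\int_{\Delta}\lvert\eta(x)-\tfrac12\rvert g(x)\,\mathrm{d}x$ over the disagreement wedge $\Delta$, push in $g\ge\gamma g_0$ and the pointwise TNC bound $\lvert\eta-\tfrac12\rvert\ge\mu_0\lvert\varphi\rvert^{\alpha/(1-\alpha)}$, then reduce the surviving geometric integral to a one-dimensional integral in the angle $\varphi$ and read off $\int_0^{\theta}\varphi^{\alpha/(1-\alpha)}\,\mathrm{d}\varphi=(1-\alpha)\theta^{1/(1-\alpha)}$. The paper compresses the entire geometric reduction into a single displayed inequality,
$\int_{\Delta}2\lvert\eta-\tfrac12\rvert g\,\mathrm{d}x\ge 2\gamma\mu_0\int_0^{\theta}\varphi^{\alpha/(1-\alpha)}\,\mathrm{d}\varphi$,
with no accounting of the $\cos^{d-2}\varphi$ Jacobian, the azimuthal cap fraction, or the radial $1/d$, so in that sense your proposal is actually more honest about where the work lies.

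One caution, though: the bookkeeping you flag as ``the main obstacle'' does not in fact ``assemble cleanly'' to the stated constant. Already in $d=2$ the integral $g_0\int_{\Delta}\lvert\varphi\rvert^{\alpha/(1-\alpha)}\,\mathrm{d}x$ equals $\frac{1}{\pi}\int_0^{\theta}t^{\alpha/(1-\alpha)}\,\mathrm{d}t$, not $\int_0^{\theta}t^{\alpha/(1-\alpha)}\,\mathrm{d}t$; in general dimension the shrinking spherical cap $\{\omega\in S^{d-2}:\omega\cdot e_2<-\tan\psi\cot\theta\}$ and the $\cos^{d-2}\psi$ Jacobian introduce a further factor of order $1/\sqrt d$. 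So the geometric inequality you reduce to, $\int_{\Delta}\lvert\varphi\rvert^{\alpha/(1-\alpha)}\,\mathrm{d}x\ge(1-\alpha)V_d\,\theta^{1/(1-\alpha)}$, is false as stated, and the true constant is $\mu=c_d\,(1-\alpha)\mu_0\gamma$ with $c_d$ depending on $d$. This is a looseness in the paper's own proof rather than a defect of your plan --- the $\theta^{1/(1-\alpha)}$ rate and the $(1-\alpha)\mu_0\gamma$ dependence are what matter downstream --- but since you explicitly set out to track the constants carefully, you should expect a dimension-dependent prefactor rather than an exact match with $2(1-\alpha)\mu_0\gamma$.
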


We now present the main theorem of this section, which establishes a lower bound on the angle between the output classifier $\widehat w$
and the Bayes classifier $w^*$ for the membership query synthesis setting,
assuming the label distribution $P_{Y|X}$ satisfies TNC condition in Eq. (\ref{tnc-angle}).
\begin{thm}
Fix $d\geq 2$, $T$, $\mu_0>0$ and $\alpha\in(0,1)$.
Suppose $\mathcal X=\mathcal S^d$ and $\mathcal Y=\{+1,-1\}$.
Let $\mathcal P_{\alpha,\mu_0}$ denote the class of all conditional label distributions that satisfy the label distribution based TNC condition in Eq. (\ref{tnc-angle})
with parameters $\alpha$, $\mu_0$.
Then the following excess risk lower bound holds: 
\begin{equation}
\inf_{A\in\mathcal A_{d,T}^{\prob}}\sup_{P_{Y|X}\in\mathcal P_{\alpha,\mu_0}}{\mathbb E[\theta(\widehat w,w^*)]} = \Omega\left(\left(\frac{d}{T}\right)^{(1-\alpha)/2\alpha}\right).
\label{eq_lower_bound}
\end{equation}
Here in the $\Omega(\cdot)$ notation we omit dependency on $\mu_0$.
\label{thm_lower_bound}
\end{thm}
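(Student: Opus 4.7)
The plan is a Fano-style reduction: build a well-separated packing inside a small cap of $\mathcal{S}^d$, equip each hypothesis with a TNC-saturating label distribution glued to a common ``outer'' template outside a thin boundary strip, and apply Fano's inequality. Setting $\epsilon_* := c\,(d/T)^{(1-\alpha)/(2\alpha)}$ for a small constant $c = c(\alpha,\mu_0)$ and $\gamma := 10\epsilon_*$, I would first pack the cap $\{w\in\mathcal S^d : \theta(w,w_0) \le \gamma\}$ around an arbitrary $w_0$ with pairwise angular separation at least $2\epsilon_*$; a spherical volume argument delivers $\log M \geq c'd$ such hypotheses, with the key feature that every pair satisfies $\theta(w_i,w_j) \le 2\gamma = O(\epsilon_*)$. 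For each $w_i$ I associate a regression function $\eta_i : \mathcal{S}^d \to [0,1]$ whose Bayes classifier is $w_i$, that satisfies Eq.~(\ref{tnc-angle}) with $(\alpha,\mu_0)$, and whose induced pointwise Bernoulli KL is bounded uniformly in $x$ by $C\mu_0^2 \epsilon_*^{2\alpha/(1-\alpha)}$.

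Constructing $\{\eta_i\}$ to achieve this uniform KL bound is the main technical hurdle. Inside the strip $S := \{x : |\varphi(x,w_0)| \le 2\gamma\}$ I take the TNC-saturating choice $\eta_i(x) = 1/2 + \mu_0 \sgn(\varphi(x,w_i))\,|\varphi(x,w_i)|^{\alpha/(1-\alpha)}$; since the packing is concentrated in a $\gamma$-cap, the spherical triangle inequality $|\varphi(x,w_i) - \varphi(x,w_j)| \le \theta(w_i,w_j) \le 2\gamma$ forces $|\varphi(x,w_i)| \le 3\gamma$ for every $x\in S$ and every $i$, so both $|\eta_i(x)-1/2|$ and $|\eta_j(x)-1/2|$ are $O(\mu_0 \epsilon_*^{\alpha/(1-\alpha)})$ and the desired strip KL bound follows from $\mathrm{KL}(p\,\|\,q) \le (p-q)^2/(q(1-q))$. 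Outside $S$ the naive saturating choice is only Lipschitz in $\varphi$ and leaks $\Theta(\mu_0^2 \epsilon_*^2)$ KL per query, which for $\alpha>1/2$ would give only the weaker $\Omega(\sqrt{d/T})$ rate; so I would instead glue all $\eta_i$ to a single outer template $\eta_0(x)$ on $\mathcal{S}^d\setminus S$, chosen so that (a) $|\eta_0(x)-1/2| \ge \mu_0 \max_j |\varphi(x,w_j)|^{\alpha/(1-\alpha)}$ (simultaneous TNC), (b) $\eta_0(x) \in [c_0,1-c_0]$ (to keep the Bernoulli KL finite), and (c) $\sgn(\eta_0(x)-1/2) = \sgn(w_0 \cdot x)$, which by construction of $S$ agrees with every $\sgn(w_i \cdot x)$ outside $S$. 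On the outer region all $\eta_i$ then coincide with $\eta_0$, so the contribution to the KL vanishes there. The hard part will be verifying that this spliced $\eta_i$ is a legal probability, still has $w_i$ as its global Bayes classifier, and satisfies TNC across the interface $\partial S$; feasibility of such an $\eta_0$ is guaranteed by taking $\mu_0$ small enough that $\mu_0(\pi/2)^{\alpha/(1-\alpha)} < 1/2$, which only rescales the absorbed constant.

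With the uniform per-query KL bound in hand the rest is routine. The standard chain-rule bound for adaptive experiments gives $I(w^*;\text{transcript}) \le T \cdot \max_{x,i,j} \mathrm{KL}(\eta_i(x)\,\|\,\eta_j(x)) \le C T \mu_0^2 \epsilon_*^{2\alpha/(1-\alpha)}$ for a uniformly random $w^* \in \{w_1,\ldots,w_M\}$; tuning the constant $c$ so that this is at most $\tfrac12 \log M$, Fano's inequality yields $\Pr[\widehat w \neq w^*] \ge 1/2$. Combined with the packing separation $\theta(w_i,w_j) \ge 2\epsilon_*$, this forces $\theta(\widehat w, w^*) \ge \epsilon_*$ on an event of probability at least $1/2$, hence $\mathbb{E}[\theta(\widehat w, w^*)] \ge \epsilon_*/2 = \Omega((d/T)^{(1-\alpha)/(2\alpha)})$. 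The passage from the Bayes-risk lower bound to the $\sup_{P_{Y|X}}$ form stated in the theorem is standard, and the dependence on $\mu_0$ (and on $\alpha$ through $c(\alpha,\mu_0)$) is absorbed into the $\Omega(\cdot)$ as the theorem permits.
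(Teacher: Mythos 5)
Your proposal is correct and follows essentially the same strategy as the paper's proof: a $2^{\Omega(d)}$-sized packing of tightly clustered hyperplanes with pairwise separation $\Theta(\epsilon_*)$, TNC-saturating label distributions glued to a common template outside a thin boundary strip so the per-query Bernoulli KL is $O(\epsilon_*^{2\alpha/(1-\alpha)})$ uniformly in $x$ and vanishes outside the strip, the chain-rule argument for adaptive (membership-query) experiments, and a Fano/Tsybakov reduction plus Markov to pass to the stated expectation bound. The one cosmetic difference is the packing construction: you invoke a spherical-cap volume (Gilbert--Varshamov type) argument, whereas the paper builds the packing explicitly from constant-weight binary codes (their Lemma on $\mathcal W$), and the paper also fixes a concrete outer template $\eta_0$---namely the TNC-saturating profile centered at $w_1^*$ capped at $1/2$---which sidesteps the feasibility discussion you flag by using $\wp(\vartheta)=\min\{2^{\alpha/(1-\alpha)}\mu_0\vartheta^{\alpha/(1-\alpha)},1/2\}$ rather than assuming $\mu_0$ small; both routes yield the same rate and absorbed constants.
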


{
Theorem \ref{thm_lower_bound} implies a lower bound for excess-risk based TNC in Eq. (\ref{tnc-md}) when the data distribution $P_{X}$
is uniform or bounded from below, as shown in Corollary \ref{cor_lower_bound}.
By Proposition \ref{prop_pool_probe_reduction}, Eq. (\ref{eq_cor_lower_bound}) holds also for stream-based algorithms $\mathcal A_{d,T}^{\seq}$.
We prove Corollary in Appendix \ref{appsec:proof_lowerbound} in \cite{mdactive-arxiv}.
}
\begin{cor}
Fix $d\geq 2,T,\mu,\gamma>0$ and $\alpha\in(0,1)$.
Suppose $\mathcal X=\mathcal S^d$, $\mathcal Y=\{+1-1\}$ and the density of $P_{X}$ is bounded from below with constant $\gamma$.
Let $\mathcal P_{\alpha,\mu}$ denotes the class of all label distributions that satisfy the excess-risk based TNC condition in Eq. (\ref{tnc-md})
with parameters $\alpha,\mu$.
Then the following lower bound holds:
\begin{equation}
\inf_{A\in\mathcal A_{d,T}^{\prob}}\sup_{P_{Y|X}\in\mathcal P_{\alpha,\mu}}{\mathbb E[\err(\widehat w)-\err(w^*)]} = \Omega\left(\left(\frac{d}{T}\right)^{1/2\alpha}\right).
\label{eq_cor_lower_bound}
\end{equation}
Here in the $\Omega(\cdot)$ notation we omit dependency on $\mu$ and $\gamma$.
\label{cor_lower_bound}
\end{cor}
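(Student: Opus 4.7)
The plan is to deduce Corollary~\ref{cor_lower_bound} from Theorem~\ref{thm_lower_bound} by (i) reducing the excess-risk TNC family to the angle-based TNC family via Proposition~\ref{prop_regression-excess-TNC}, and (ii) converting a lower bound on the expected angle into one on the expected excess risk by combining the TNC inequality with Jensen's inequality.

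First I would set $\mu_0 := \mu / (2(1-\alpha)\gamma)$. By Proposition~\ref{prop_regression-excess-TNC}, under the hypothesis that the density of $P_X$ is bounded below by $\gamma g_0$, any $P_{Y|X}$ satisfying the angle-based TNC~(\ref{tnc-angle}) with parameters $(\alpha,\mu_0)$ automatically satisfies the excess-risk TNC~(\ref{tnc-md}) with parameters $(\alpha,\mu)$. Hence the angle-based family $\mathcal{P}_{\alpha,\mu_0}$ appearing in Theorem~\ref{thm_lower_bound} embeds into the excess-risk family $\mathcal{P}_{\alpha,\mu}$ of the corollary (one may in particular take $P_X$ uniform on $\mathcal{S}^d$, for which $\gamma=1$). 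Passing to this smaller subfamily in the supremum yields
\[
\sup_{P_{Y|X} \in \mathcal{P}_{\alpha,\mu}} \mathbb{E}[\err(\widehat w) - \err(w^*)] \;\geq\; \sup_{P_{Y|X} \in \mathcal{P}_{\alpha,\mu_0}} \mathbb{E}[\err(\widehat w) - \err(w^*)].
\]

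Second, I would translate angle lower bounds into excess-risk lower bounds. Applying~(\ref{tnc-md}) to the (random) output $\widehat w$ gives $\err(\widehat w) - \err(w^*) \geq \mu \cdot \theta(\widehat w,w^*)^{1/(1-\alpha)}$ almost surely. Since $\alpha \in (0,1)$ implies $1/(1-\alpha) > 1$, the function $t \mapsto t^{1/(1-\alpha)}$ is convex on $[0,\pi]$, so Jensen's inequality yields $\mathbb{E}[\theta(\widehat w,w^*)^{1/(1-\alpha)}] \geq \mathbb{E}[\theta(\widehat w,w^*)]^{1/(1-\alpha)}$. Theorem~\ref{thm_lower_bound} lower bounds the expected angle by $\Omega((d/T)^{(1-\alpha)/2\alpha})$; raising this to the power $1/(1-\alpha)$ and multiplying by $\mu$ produces the claimed $\Omega((d/T)^{1/2\alpha})$ rate. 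The infimum over $A \in \mathcal{A}_{d,T}^{\prob}$ commutes with this chain of inequalities because each step is pointwise in the algorithm.

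I do not anticipate a serious obstacle; the argument is essentially a change-of-parameter reduction followed by Jensen. The only item worth checking carefully is that the hard conditional distributions constructed in the proof of Theorem~\ref{thm_lower_bound} can be realized once we fix a marginal $P_X$ satisfying the $\gamma$-lower-bound hypothesis. This is automatic, since~(\ref{tnc-angle}) is a pointwise condition on $\eta(\cdot)$ that does not involve the marginal, and the query synthesis protocol permits probes at every $x \in \mathcal{S}^d$ irrespective of $P_X$; the adversary may thus pair the same hard regression function $\eta$ with, say, the uniform $P_X$ and remain within the class considered by the corollary.
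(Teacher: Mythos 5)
Your proposal is correct and follows essentially the same route as the paper's proof: restrict the supremum to the angle-based subfamily $\mathcal Q_{\alpha,\mu_0}$ via Proposition~\ref{prop_regression-excess-TNC} with $\mu_0 = \mu/(2(1-\alpha)\gamma)$, apply Eq.~(\ref{tnc-md}) pointwise to convert angle to excess risk, use Jensen's inequality on the convex map $t\mapsto t^{1/(1-\alpha)}$, and invoke Theorem~\ref{thm_lower_bound}. The closing observation that the synthesis protocol decouples the hard $\eta$ from the marginal $P_X$ is a sound sanity check, though the paper leaves it implicit.
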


\subsection{Proof sketch of Theorem \ref{thm_lower_bound}}\label{sec:sketch_lowerbound}

In this section we sketch a proof for Theorem \ref{thm_lower_bound}.
The complete proof is deferred to Appendix \ref{appsec:proof_lowerbound} due to space constraints in \cite{mdactive-arxiv}.
{
We assume the data dimension $d\geq 2$ is even.
This does not lose any generality because the lower bounds in Eq. (\ref{eq_lower_bound}) and (\ref{eq_cor_lower_bound})
remain asymptotically the same if $d$ is replaced with $(d+1)$.
}

The main idea of the proof is the construction of a hypothesis set $\mathcal W=\{w_1^*,\cdots,w_m^*\}\subseteq \mathbb R^d$
with $\log|\mathcal W|=\Omega(d)$ such that for any hypothesis pair $(w_i^*,w_j^*)$ the angle $\theta(w_i^*,w_j^*)$ is large
while $\kl(P_{i,T}\|P_{j,T})$ is small
\footnote{For two continuous distributions $P$ and $Q$ with densities $p$ and $q$, their Kullback-Leibler (KL) divergence $\kl(P\|Q)$ is defined as $\int{p(x)\log{\frac{p(x)}{q(x)}}\ud x}$
if $P\ll Q$ and $+\infty$ otherwise.}
.
Here $P_{i,T}$ denotes the distribution of $T$ labels under the label distribution associated with $w_i^*$ (rigorous mathematical definition
of $P_{i,T}$ is given in the appendix of \cite{mdactive-arxiv}).
Intuitively, we want $w_i^*$ and $w_j^*$ to be well separated in terms of the loss function (i.e., $\theta(w_i^*,w_j^*)$)
while being hard to distinguish by any active learning algorithm under a fixed query budget $T$ (implied by the KL divergence condition).
%The lower bound in Theorem \ref{thm_lower_bound} can then be obtained by applying standard information-theoretical lower bounds \cite{tsybakov-book}.

The following lemma accomplishes the first objective by lower bounding $\theta(w_i^*,w_j^*)$.
Its proof is based on the construction of constant-weight codings \cite{const-weight-coding} and is deferred to Appendix \ref{appsec:proof_lowerbound} in \cite{mdactive-arxiv}.
\begin{lem}
Assume $d$ is even.
Fix a parameter $t\in(0,1/4)$. 
%Assume $d$ is even.
There exists a hypothesis set $\mathcal W=\{w_1^*,\cdots,w_m^*\}\subseteq\mathbb R^d$
such that
\begin{equation}
t\leq \theta(w_i^*,w_j^*)\leq 6.5t,\quad\forall i\neq j;
\label{eq_angle_lowerbound}
\end{equation}
furthermore, $\log|\mathcal W|\geq 0.0625d$ for $d\geq 2$.
\label{lem_angle_lowerbound}
\end{lem}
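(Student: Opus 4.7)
The plan is to construct $\mathcal W$ by perturbing a fixed base unit vector along directions encoded by a binary code, so that pairwise angles between distinct $w_i^*$ reduce (up to rescaling) to normalized Hamming distances between the corresponding codewords. Fix the standard basis $e_1,\dots,e_d$ of $\mathbb R^d$, designate $e_1$ as the base direction, and identify the orthogonal complement with $\mathbb R^{d-1}$. For each codeword $u$ drawn from a code $\mathcal C \subseteq \{-1,+1\}^{d-1}$ to be chosen below, let $v_u = u/\sqrt{d-1}$ (a unit vector in the orthogonal complement) and set
\begin{equation*}
w_u \;=\; \cos(s)\, e_1 + \sin(s)\, v_u \;\in\; \mathbb R^d,
\end{equation*}
where $s$ is a slight rescaling of $t$ chosen at the end.

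The first step is to produce a code $\mathcal C \subseteq \{-1,+1\}^{d-1}$ with $|\mathcal C| \geq 2^{d/16}$ and such that every pair $(u,u')$ has Hamming distance $d_H(u,u') \in \bigl[(d-1)/4,\ 3(d-1)/4\bigr]$. One can either invoke a Gilbert--Varshamov style existence result for constant-weight codes (the route suggested by the paper's citation to the constant-weight coding literature), or run a direct probabilistic argument: sample $m = \lceil 2^{d/16} \rceil$ codewords uniformly from $\{-1,+1\}^{d-1}$ and apply Hoeffding to each pair's Hamming distance, then union-bound over $\binom{m}{2}$ pairs---the failure probability is $\binom{m}{2}\cdot 2e^{-\Omega(d)}$, which is $o(1)$ since $\log m \le d/16$. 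With $\mathcal C$ in hand, the identity $\langle v_u, v_{u'}\rangle = 1 - 2d_H(u,u')/(d-1) \in [-1/2,\,1/2]$ gives $\langle w_u, w_{u'}\rangle = 1 - \sin^2(s)\,(1 - \langle v_u, v_{u'}\rangle)$; combining with the half-angle identity $1-\cos\theta = 2\sin^2(\theta/2)$ yields
\begin{equation*}
\sin\!\bigl(\theta(w_u,w_{u'})/2\bigr) \;=\; \sin(s)\,\sqrt{(1-\langle v_u,v_{u'}\rangle)/2},
\end{equation*}
and since $1-\langle v_u,v_{u'}\rangle \in [1/2,\,3/2]$, all pairwise angles lie in an interval whose ratio of endpoints is at most $\sqrt{3}$. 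Choosing $s$ so that the smaller endpoint of that interval equals $t$, and using $t<1/4$ to control the $\arcsin$ correction, the angles fit inside $[t,\,6.5t]$ with considerable slack.

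The main obstacle is a constant chase: simultaneously hitting $|\mathcal C| \ge 2^{0.0625 d}$ with a Hamming-distance window narrow enough to keep the final angle ratio below $6.5$ after the $\sin/\arcsin$ distortions for $t$ as large as $1/4$. Since the construction naturally produces an angle ratio near $\sqrt{3}\approx 1.73$, the gap to $6.5$ is generous and easily absorbs these corrections; the real care is in the Hoeffding-constant calculation that secures the code size $2^{d/16}$ together with the desired Hamming window. The boundary regime of very small $d$ (e.g.\ $d=2$, where the code inside $\{-1,+1\}^{d-1}$ is too short for concentration) is handled trivially by placing two vectors at angle exactly $t$, which already satisfies $\log|\mathcal W|=1 \geq 0.125 = 0.0625 \cdot d$.
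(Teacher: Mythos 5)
Your construction is correct in spirit and takes a genuinely different route from the paper, so the comparison is worth spelling out. The paper builds $\mathcal W$ by shifting the all-ones vector: it takes a constant-weight code $S\subseteq\{0,1\}^d$ (every codeword of weight $d/2$, pairwise Hamming distance $\geq d/16$, size guaranteed by a cited nonasymptotic bound of Graham--Sloane type) and sets $w_i^* = Z^{-1}\bigl((1,\dots,1) - 4t\,z_i\bigr)$. The constant-weight constraint is essential there because it makes the normalization $Z$ the same for every $i$ and reduces $\langle w_i^*,w_j^*\rangle$ to an affine function of $\Delta_H(z_i,z_j)$. Your version builds a narrow cone of opening $s$ around the axis $e_1$, using $\{\pm 1\}$-codewords living in the orthogonal complement; since $v_u = u/\sqrt{d-1}$ is a unit vector for \emph{every} $u\in\{\pm 1\}^{d-1}$, normalization is automatic and the constant-weight requirement evaporates. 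This is a genuine simplification, and the half-angle identity $\sin(\theta/2)=\sin(s)\sqrt{(1-\langle v_u,v_{u'}\rangle)/2}$ gives a cleaner handle on the angle ratio than the paper's Taylor bounds on $\cos\theta$. In fact you need only the lower end of your Hamming window: $1-\langle v_u,v_{u'}\rangle\leq 2$ holds unconditionally, so $d_H\geq(d-1)/4$ alone already caps the ratio at $2$ (before the $\arcsin$ correction), which is far inside $6.5$.

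The one place you are too quick is the existence of the code for \emph{all} even $d\geq 2$. Your union-bound computation gives a failure probability on the order of $m^2e^{-(d-1)/8}$ with $m=\lceil 2^{d/16}\rceil$; tracking the constants ($m^2\leq 4\cdot 2^{d/8}$ once $d\geq 16$, two-sided Hoeffding $\leq 2e^{-(d-1)/8}$), this is below $1$ only for $d$ on the order of $40$, not for all $d$ that are merely ``not too small.'' The phrase ``$o(1)$'' is an asymptotic statement as $d\to\infty$ and does not bridge the gap. You patch $d=2$ explicitly with two vectors, which buys you $\log|\mathcal W|=1\geq 0.0625d$ only up to $d=16$; for even $d$ in roughly $\{18,\dots,38\}$ you still need $3$ to $6$ unit vectors at pairwise angles in $[t,6.5t]$. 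That is easy to arrange (e.g.\ equally spaced perturbations of $e_1$ within a fixed $2$-plane), but it must be stated, since the lemma is claimed for all even $d\geq 2$. The paper avoids this wider boundary regime because the constant-weight existence bound it cites is nonasymptotic, so it only needs the trivial two-vector patch for $d<16$.
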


We next tackle the second objective of upper bounding $\kl(P_{i,T}\|P_{j,T})$.
This requires designing label distributions $\{P_{Y|X}^{(i)}\}_{i=1}^m$ such that they satisfy the TNC condition in Eq. (\ref{tnc-angle})
while having small KL divergence between $P_{Y|X}^{(i)}$ and $P_{Y|X}^{(j)}$ for all distinct pairs $(i,j)$.
We construct the label distribution for the $i$th hypothesis as below:
\begin{equation}
P_{Y|X}^{(i)}(Y=1|X=x) = \left\{\begin{array}{l}
\frac{1}{2} + \sgn(w_i^*\cdot x)\cdot \wp(|\varphi(w_i^*, x)|),\\
 \text{if }|\varphi(w_1^*, x)|\leq 6.5t;\\
\frac{1}{2} + \sgn(w_1^*\cdot x)\cdot \wp(|\varphi(w_1^*, x)|),\\
 \text{if }|\varphi(w_1^*,x)| > 6.5t;
\end{array}\right.
\label{eq_pyx}
\end{equation}
where $\varphi(w,x)=\frac{\pi}{2}-\theta(x,w)\in[-\frac{\pi}{2},\frac{\pi}{2}]$ and $\wp$ is defined as
\begin{equation}
\wp(\vartheta) := \min\{2^{\alpha/(1-\alpha)}\mu_0\cdot \vartheta^{\alpha/(1-\alpha)}, 1/2\}.
\label{eq_f}
\end{equation}
A graphical illustration of $P_{Y|X}^{(1)}$ and $P_{Y|X}^{(i)}$ constructed in Eq. (\ref{eq_pyx}) is depicted in Figure \ref{fig_pyx}.
\begin{figure}[t]
\centering
\subfigure[Illustration of $P_{Y|X}^{(1)}$]{\label{fig_p1}\includegraphics[width=0.9\linewidth]{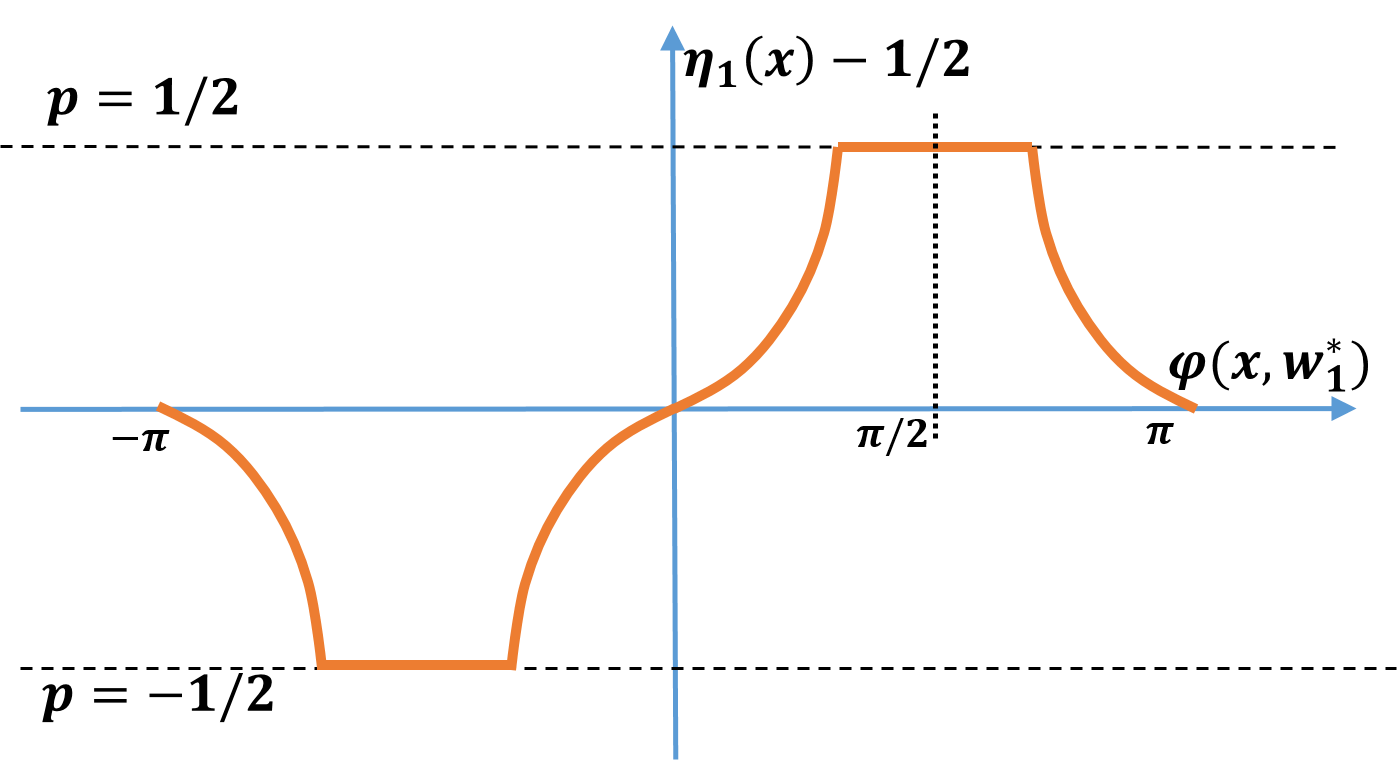}}
\subfigure[Illustration of $P_{Y|X}^{(i)}$, $i\neq 1$]{\label{fig_pi}\includegraphics[width=0.9\linewidth]{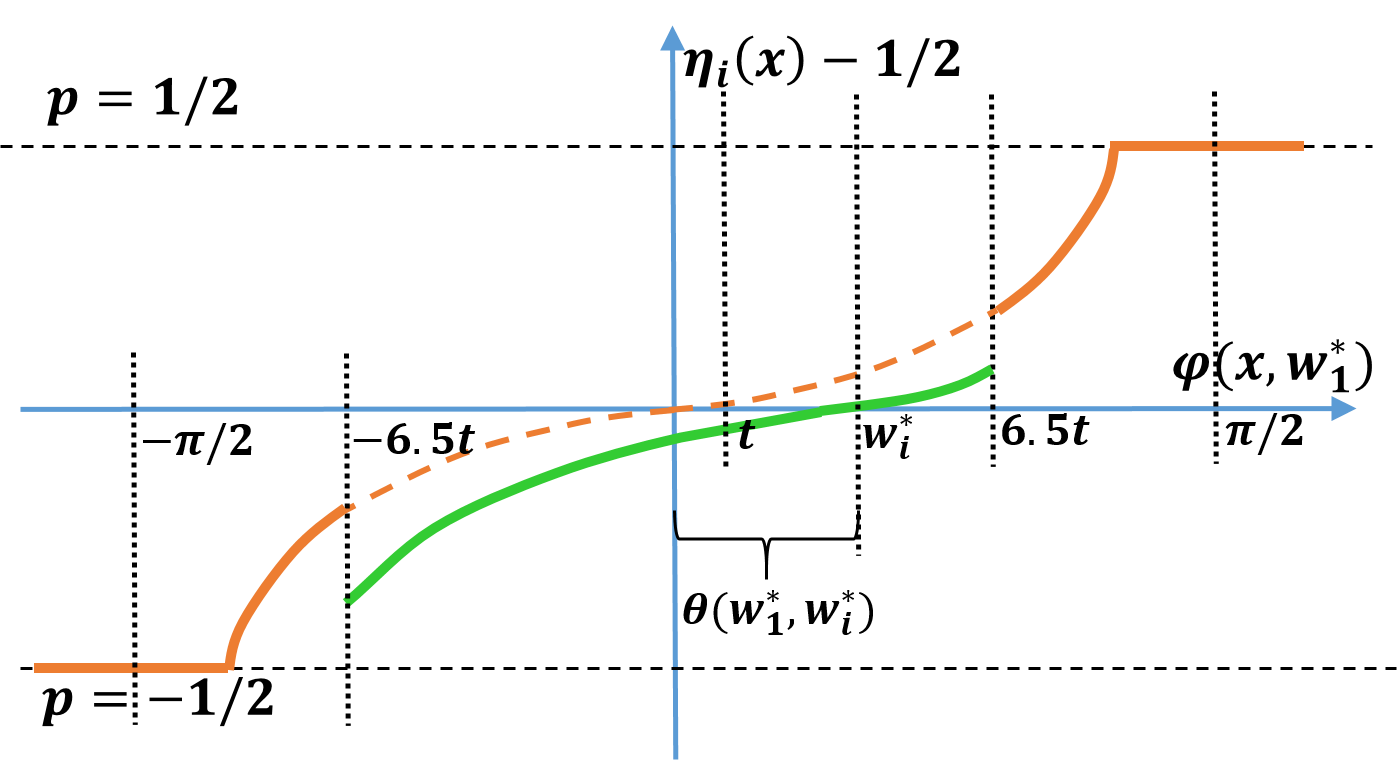}}
\caption{Graphical illustrations of $P_{Y|X}^{(1)}$ (left) and $P_{Y|X}^{(i)}$ (right) constructed as in Eq. (\ref{eq_pyx}).
Solid lines indicate the actual shifted probability density functions $\eta(x)-1/2$ where $\eta(x)=\Pr[Y=1|X=x]$.
In Figure \ref{fig_pi}, the orange curve (both solid and dashed) satisfies TNC with respect to $w_1^*$ and the green curve satisfies TNC with respect to $w_i^*$.
Note the two discontinuities at $\varphi(x,w_1^*) = \pm 6.5t$.
Figure \ref{fig_pi} is not 100\% accurate because it assumes that $\varphi(x,w_1^*)=\varphi(x,w_i^*)+\theta(w_1^*,w_i^*)$,
which may not hold for $d>2$.
}
\label{fig_pyx}
\end{figure}
We use the same distribution when data points are far from the optimal classification hyperplane (i.e., $|\varphi(w^*,x)|>6.5t$)
in order to maximize the ``indistinguishability'' of the constructed conditional distributions.
On the other hand, by TNC assumption $P_{Y|X}^{(i)}$ must have $f_i^*(x)=\sgn(w_i^*\cdot x)$ as its Bayes classifier
and TNC condition along the hyperplane $w_i^*$ must hold.
As a result, when a data point is close to the hyperplane represented by $w_i^*$ the label distribution differs for each hypothesis $w_i^*$ in $\mathcal W$.
Similar construction of adversarial distributions was also adopted in \cite{active-minimax} to prove lower bounds
for one-dimensional active threshold learners.

Lemma \ref{lem_kl_upperbound} summarizes key properties of the label distributions $\{P_{Y|X}^{(i)}\}_{i=1}^m$ constructed as in Eq. (\ref{eq_pyx}).
It is proved in Appendix \ref{appsec:proof_lowerbound} of \cite{mdactive-arxiv}.
\begin{lem}
Suppose $\mathcal W=\{w_1^*,\cdots,w_m^*\}\subseteq\mathbb R^d$ satisfies Eq. (\ref{eq_angle_lowerbound}) and
$\{P_{Y|X}^{(i)}\}_{i=1}^m$ is constructed as in Eq. (\ref{eq_pyx}).
Then for every $i$ the hypothesis $f_i^*(x)=\sgn(w_i^*\cdot x)$ is the Bayes estimator of $P_{Y|X}^{(i)}$ and
the TNC condition in Eq. (\ref{tnc-angle}) holds with respect to $w_i^*$.
In addition, for every $i\neq j$ the KL divergence between $P_{i,T}$ and $P_{j,T}$ is upper bounded by
\begin{equation}
\kl(P_{i,T}\|P_{j,T})\leq C\cdot Tt^{2\alpha/(1-\alpha)},
\label{eq_kl_upperbound}
\end{equation}
where $C$ is a positive constant that does not depend on $T$ or $t$.
\label{lem_kl_upperbound}
\end{lem}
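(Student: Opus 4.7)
The plan is to decompose the lemma into its two parts: the Bayes-classifier/TNC claim about each $P_{Y|X}^{(i)}$, and the KL bound on the joint distributions $P_{i,T}$.

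For the first part, the key observation is that on the ``outer'' region $\{x:|\varphi(w_1^*,x)|>6.5t\}$ the two labeling rules agree on sign, because Lemma \ref{lem_angle_lowerbound} ensures $\theta(w_1^*,w_i^*)\leq 6.5t$, and $|\varphi(w_1^*,x)|>\theta(w_1^*,w_i^*)$ forces $\sgn(w_1^*\cdot x)=\sgn(w_i^*\cdot x)$. On the ``inner'' region the definition already uses $w_i^*$, so $\sgn(\eta(x)-1/2)=\sgn(w_i^*\cdot x)$ everywhere, identifying the Bayes classifier as $f_i^*$. For TNC, in the inner region $|\eta(x)-1/2|\geq \mu_0|\varphi(w_i^*,x)|^{\alpha/(1-\alpha)}$ is immediate from the definition of $\wp$ (using $2^{\alpha/(1-\alpha)}\geq 1$). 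In the outer region I would use the triangle-like bound $|\varphi(w_i^*,x)|\leq|\varphi(w_1^*,x)|+\theta(w_1^*,w_i^*)\leq 2|\varphi(w_1^*,x)|$, so that $\wp(|\varphi(w_1^*,x)|)\geq \wp(|\varphi(w_i^*,x)|/2)$; the factor $2^{\alpha/(1-\alpha)}$ built into $\wp$ exactly cancels the $(1/2)^{\alpha/(1-\alpha)}$ and leaves the desired $\mu_0|\varphi(w_i^*,x)|^{\alpha/(1-\alpha)}$. A brief case check is needed when $\wp$ saturates at $1/2$, which reduces to a mild constraint on $\mu_0$.

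For the KL bound, I would write $P_{i,T}$ as the joint distribution over queries and labels produced by an arbitrary adaptive algorithm, and apply the chain rule for KL divergence: $\kl(P_{i,T}\|P_{j,T})=\sum_{t=1}^T\mathbb E_{P_{i,T}}[\kl(P_{Y|X}^{(i)}(\cdot|X_t)\|P_{Y|X}^{(j)}(\cdot|X_t))]$, where the algorithm-dependent query factors cancel. Each summand is a KL divergence between two Bernoullis. Outside the inner region the two distributions coincide (both use $w_1^*$), so the contribution vanishes. Inside the inner region, writing the two success probabilities as $1/2+p$ and $1/2+q$, the construction gives $|p|,|q|\leq 2^{\alpha/(1-\alpha)}\mu_0(13t)^{\alpha/(1-\alpha)}$, using $|\varphi(w_i^*,x)|\leq 13t$ there, so both are bounded away from $0$ and $1$ for small $t$. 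Applying the standard bound $\kl(\mathrm{Bern}(1/2+p)\|\mathrm{Bern}(1/2+q))\leq (p-q)^2/((1/2+q)(1/2-q))$ together with $|p-q|\leq|p|+|q|=O(t^{\alpha/(1-\alpha)})$ yields a per-query KL of $O(t^{2\alpha/(1-\alpha)})$, and summing over $T$ rounds gives Eq.~(\ref{eq_kl_upperbound}).

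The main obstacle I anticipate is making the TNC constant in the outer region come out exactly $\mu_0$: this is precisely what forces the peculiar factor $2^{\alpha/(1-\alpha)}$ in the definition of $\wp$, and it is easy to be off by that factor if one does not track it carefully through the angle-halving step. A secondary concern is that the KL argument must allow for adaptive query strategies, but the chain-rule decomposition handles this cleanly provided we take the worst-case per-query KL, since an adversarial algorithm can concentrate all its queries inside the critical band where the two label distributions differ. The saturation branch of $\wp$ only affects constants and imposes a mild upper bound on $\mu_0$, which is consistent with the small-$t$ regime $t\in(0,1/4)$ assumed in Theorem \ref{thm_lower_bound}.
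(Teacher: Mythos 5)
Your proposal matches the paper's proof in all essentials: identifying the Bayes classifier via sign agreement in the outer region, establishing TNC through the triangle-inequality bound $|\varphi(w_i^*,x)|\leq|\varphi(w_1^*,x)|+\theta(w_1^*,w_i^*)\leq 2|\varphi(w_1^*,x)|$ so that the $2^{\alpha/(1-\alpha)}$ factor in $\wp$ absorbs the halving, and bounding the KL by the chain rule (canceling the algorithm-dependent query factors) followed by a per-query Bernoulli KL bound of order $t^{2\alpha/(1-\alpha)}$ on the band $|\varphi(w_1^*,x)|\leq 6.5t$. The only cosmetic difference is that the paper invokes its Lemma~\ref{lem_bernoulli_kl} ($\kl\leq 8(p-q)^2$) rather than the $\chi^2$-type Bernoulli bound you cite, but these are interchangeable here.
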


With Lemma \ref{eq_pyx} lower bounding $\theta(w_i^*,w_j^*)$ and Lemma \ref{lem_kl_upperbound} upper bounding $\kl(P_{i,T}\|P_{j,T})$,
Theorem \ref{thm_lower_bound} and Corollary \ref{cor_lower_bound} can be proved by applying standard information theoretical lower bounds \cite{tsybakov-book}.
A complete proof can be found in Appendix \ref{appsec:proof_lowerbound} in \cite{mdactive-arxiv}.

\section{Discussion and remarks}\label{sec:discussion}

\paragraph{Comparison with noise-robust disagreement-based active learning algorithms}
In \cite{hanneke-adaptive} another noise-robust adaptive learning algorithm was introduced.
The algorithm is originally proposed in \cite{agnostic-active-dasgupta}
and is based on the concept of \emph{disagreement coefficient} introduced in \cite{disagreement-coeff}.
The algorithm adapts to different noise level $\alpha$, and achieves an excess error rate of
\begin{equation}
O\left(\left(\frac{\vartheta(d\log T+\log(1/\delta))}{T}\right)^{\frac{1}{2\alpha}}\right)
\end{equation}
with probability $1-\delta$, where $d$ is the underlying dimensionality, 
$T$ is the sample query budget and $\vartheta$ is the disagreement coefficient.
Under our scenario where $X$ is the origin-centered unit ball in $\mathbb R^d$ for $d>2$,
the hypothesis class $\mathbb C$ contains all linear separators whose decision surface passes passing the origin
and $P_{X}$ is the uniform distribution,
the disagreement coefficient $\vartheta$ satisfies \cite{disagreement-coeff}
%\begin{equation}
$\frac{\pi}{4}\sqrt{d} \leq \vartheta\leq \pi\sqrt{d}.$
%\end{equation}
As a result, the algorithm presented in this paper achieves a polynomial improvement in $d$ in terms of the convergence rate.
Such improvements show the advantage of margin-based active learning
and were also observed in \cite{active-log-concave}.
Also, our algorithm is considerably much simpler and does not require computing lower and upper confidence bounds
on the classification performance.

%\noindent{\bf 4.3 Connection to adaptive convex optimization: }
\paragraph{Connection to adaptive convex optimization}
Algorithm \ref{alg_main} is inspired by an adaptive algorithm for stochastic convex optimization presented in \cite{juditsky-nesterov}.
A function $f$ is called \emph{uniformly convex} on a closed convex set $Q$ if there exists $\rho\geq 2$ and $\mu\geq 0$
such that for all $x,y\in Q$ and $\alpha\in[0,1]$,
\begin{equation}
f(\alpha x+(1-\alpha)y) \leq \alpha f(x) + (1-\alpha)f(y) - \frac{1}{2}\mu\alpha(1-\alpha)\|x-y\|^{\rho}.
\label{uniform-convex}
\end{equation}
Furthermore, if $\mu > 0$ we say the function $f$ is \emph{strongly convex}.
In \cite{juditsky-nesterov} an adaptive stochastic optimization algorithm for uniformly and strongly convex functions was presented.
The algorithm adapts to unknown convexity parameters $\rho$ and $\mu$ in Eq. (\ref{uniform-convex}).

In \cite{alt13} a connection between multi-dimensional stochastic convex optimization and one-dimensional active learning was established.
The TNC condition in Eq. (\ref{tnc-md}) and the strongly convex condition in Eq. (\ref{uniform-convex}) are closely related,
and the exponents $\alpha$ and $\rho$ are tied together in \cite{optimal-rate}.
Based on this connection, a one-dimensional active threshold learner that adapts to unknown TNC noise levels was proposed.

In this paper, we extend the algorithms presented in \cite{juditsky-nesterov,alt13} to build an adaptive margin-based active learning
for multi-dimensional data.
Furthermore, the presented algorithm adapts to all noise level parameters $\alpha\in(0,1)$ with appropriate setting of $r$,
which corresponds to convexity parameters $\rho>1$.
\footnote{The relationship between $\alpha$ and $\rho$ can be made explicitly by noting $\alpha = 1-1/\rho$.}
Therefore,
we conjecture the existence of similar stochastic optimization algorithms that can adapt to a notion of degree of convexity $\rho<2$
as introduced in \cite{alt13}.

%\noindent{\bf 4.4 Future work: }
\paragraph{Future work}
%Finally, we mention two directions of future work.
%First, Algorithm \ref{alg_main} adapts to unknown level of label noise when given the query budget $T$.
%It is an interesting problem whether there exist adaptive active learning algorithms when the target error rate $\epsilon$
%instead of query budget $T$ is given.
Algorithm \ref{alg_main} fails to handle the case when $\alpha = 0$.
We feel it is an interesting direction of future work to design active learning algorithms that adapts to $\alpha = 0$
while still retaining the exponential improvement on convergence rate for this case, 
which is observed in previous active learning research \cite{colt07-active,active-log-concave,active-minimax}.

\section*{Acknowledgement}

This research is supported in part by NSF CAREER IIS-1252412.
We would also like to thank Aaditya Ramdas for helpful discussions and Nina Balcan for pointing out an error in an earlier proof.

\bibliography{mdactive}
\bibliographystyle{aaai}

\clearpage
\onecolumn

\appendix

\section{Proof of Theorem \ref{thm_main}}\label{appsec:proof_upperbound}

In this section we give complete proof of Theorem \ref{thm_main} in Section \ref{sec:upperbound}.
We first prove Lemma \ref{lem_opt_passive} and Corollary \ref{cor_opt_passive}.
They analyze the statistical rate of error convergence for passive learning (i.e., per-iteration rate for Algorithm \ref{alg_main}).
The results justify Eq. (\ref{eq_opt_passive}) in Section \ref{sec:upperbound} and can be viewed as a slight generalization of Theorem 4 in \cite{colt07-active}.

\begin{lem}[Optimal passive learning]
Fix $k$.
Let $\widehat w_{k-1}$ be a linear classifier and $b_{k-1}$ be a margin parameter.
Suppose $D_k=\{(x_i,y_i)\}_{i=1}^{n}$ is a training data set of size $n$ that satisfies
$|\widehat w_{k-1}\cdot x_i|\leq b_{k-1}$ for every data point $x_i$.
For some $0<\beta_{k-1}<\frac{\pi}{2}$,
let $\mathcal F_k$ denote the set of all linear classifiers in $B_{\theta}(\widehat w_{k-1}, \beta_{k-1})$ and define
$\widehat w_k= \argmin_{w\in \mathcal F_k}{\widehat{\err}(w|D_k)}$,
$w_k^*= \argmin_{w\in\mathcal F_k}{\err(w|S_1^{(k)})}$,
$\widetilde w_k^*= \argmin_{w\in\mathcal F_k}{\err(w|S_1^{(k)})}$,
where $S_1^{(k)} = \{x||\widehat w_{k-1}\cdot x|\leq b_{k-1}\}$ is the set of all points within a margin of $b_{k-1}$ with respect to $\widehat w_{k-1}$
and $\widehat{\err}(w|D_k) = \frac{1}{|D_k|}\sum_{(x,y)\in D_k}{I[yw\cdot x<0]}$ is the empirical $0/1$ error defined on data set $D_k$.
%and suppose $\widehat w_k,w_k^*\in B_{\theta}(\widehat w_{k-1},\beta_{k-1})$.
%If $d\geq 4$ and $T \geq 1+1/\log(1/r)$. 
%$n_k \geq \frac{\kappa'}{\epsilon_k^2}(d+1+\log(1/\delta))$
%\footnote{$\kappa'$ is a universal constant in the optimal passive learning bound.
%See Theorem 8 in \cite{colt07-active} for example.}
%with $0<\epsilon_k<1$
If $d\geq 4$
and $b_{k-1}\geq \gamma$
with
$$
\gamma := \frac{2\sin\beta_{k-1}}{\sqrt{d}}\sqrt{\ln C+\ln(1+\sqrt{\ln\max(1,\cot\beta_{k-1})})}
$$ 
for some constant $C>0$,
then with probability at least $1-\delta$,
\begin{equation}
\err(\widehat w_k) - \err(w_k^*) \leq \epsilon'\cdot\frac{b_{k-1}\sqrt{d}}{2\sqrt{\pi}} + \frac{2\sin\beta_{k-1}}{C\cdot \cos\beta_{k-1}},
\end{equation}
where $\epsilon'$ satisfies
$$
\epsilon' = \widetilde O\left(\sqrt{\frac{d+1+\log(1/\delta)}{n}}\right).
$$
\label{lem_opt_passive}
\end{lem}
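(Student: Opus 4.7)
The plan is to decompose the excess risk according to whether $x$ lies in the margin band $S_1^{(k)}=\{x:|\widehat w_{k-1}\cdot x|\leq b_{k-1}\}$ or its complement $S_2^{(k)}$. Writing $p_1=\Pr_{P_X}(S_1^{(k)})$ and $p_2=1-p_1$,
\begin{equation*}
\err(\widehat w_k)-\err(w_k^*) = p_1\bigl[\err(\widehat w_k|S_1^{(k)})-\err(w_k^*|S_1^{(k)})\bigr] + p_2\bigl[\err(\widehat w_k|S_2^{(k)})-\err(w_k^*|S_2^{(k)})\bigr].
\end{equation*}
The first term, supported on the band where we have labels, will be controlled by empirical-risk concentration; the second, supported outside the band, will be controlled by a purely geometric disagreement estimate. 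These exactly match the two summands in the conclusion.

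Inside the band, the labeled set $D_k$ is an i.i.d. sample of size $n$ from $P_X(\cdot\,|\,S_1^{(k)})$, and $\mathcal F_k$ is a subclass of homogeneous linear separators on $\mathbb R^d$, so its VC dimension is at most $d$. Standard VC/Rademacher symmetrization therefore yields, with probability $1-\delta$,
\begin{equation*}
\sup_{w\in\mathcal F_k}\bigl|\widehat{\err}(w|D_k)-\err(w|S_1^{(k)})\bigr|\;\leq\;\epsilon'/2,
\end{equation*}
with $\epsilon'=\widetilde O(\sqrt{(d+1+\log(1/\delta))/n})$. Combining this uniform deviation with the empirical-risk-minimizer property of $\widehat w_k$ gives $\err(\widehat w_k|S_1^{(k)})-\err(w_k^*|S_1^{(k)})\leq \epsilon'$. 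Multiplying by the marginal estimate $p_1\leq b_{k-1}\sqrt d/(2\sqrt\pi)$, which follows because the one-dimensional projection of the uniform distribution on $\mathcal S^d$ has density at the origin bounded by $\sqrt d/(2\sqrt\pi)$ (behaving essentially like $\mathcal N(0,1/d)$ on scales $o(1)$), produces the first term in the stated bound.

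For the outside-band contribution, I would upper bound $|\err(\widehat w_k|S_2^{(k)})-\err(w_k^*|S_2^{(k)})|$ by the probability that $\widehat w_k$ and $w_k^*$ disagree on $x\in S_2^{(k)}$. Since both classifiers lie in $B_\theta(\widehat w_{k-1},\beta_{k-1})$, their angular separation is at most $2\beta_{k-1}$, and their disagreement region is a double wedge of opening $\leq 2\beta_{k-1}$ around the decision boundary of $\widehat w_{k-1}$. The key geometric estimate, of the Balcan--Broder--Zhang type for uniform-on-ball distributions, is
\begin{equation*}
\Pr_x\bigl[\widehat w_k\text{ and }w_k^*\text{ disagree on }x,\; x\in S_2^{(k)}\bigr]\;\leq\; c_1\sin\beta_{k-1}\cdot \exp\!\bigl(-c_2\,d\,b_{k-1}^2/\sin^2\beta_{k-1}\bigr).
\end{equation*}
The definition of $\gamma$ is reverse-engineered so that $b_{k-1}\geq\gamma$ forces the exponential on the right to be at most $1/(C\cos\beta_{k-1})$; the $\ln(1+\sqrt{\ln\max(1,\cot\beta_{k-1})})$ correction in $\gamma$ accounts for the extra logarithmic margin needed when $\beta_{k-1}$ is close to $\pi/2$ and $\cos\beta_{k-1}$ degenerates.

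The genuinely hard step is the outside-band geometric tail: one must integrate the double-wedge intersected with the strip $|\widehat w_{k-1}\cdot x|>b_{k-1}$ against the marginal density $\propto(1-t^2)^{(d-1)/2}$ of the uniform distribution on the $d$-ball, which concentrates at scale $1/\sqrt d$, and extract the sub-Gaussian factor $\exp(-c\,d b_{k-1}^2/\sin^2\beta_{k-1})$. The risk decomposition, the VC bound, and the marginal estimate $p_1\leq b_{k-1}\sqrt d/(2\sqrt\pi)$ are routine by comparison. The hypothesis $d\geq 4$ enters precisely to ensure that the one-dimensional marginal of the uniform distribution on $\mathcal S^d$ has the requisite Gaussian-like concentration used in both the inside and outside estimates.
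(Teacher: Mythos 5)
Your risk decomposition into the band $S_1^{(k)}$ and its complement, the VC/uniform-deviation control of the in-band term, and the marginal estimate $\Pr[x\in S_1^{(k)}]\leq b_{k-1}\sqrt d/(2\sqrt\pi)$ all match the paper's proof step for step. The gap is in your treatment of the outside-band term, and it is a real one.

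You propose to bound $\Pr[\widehat w_k,w_k^*\text{ disagree},\; x\in S_2^{(k)}]$ directly, treating the disagreement set of $\widehat w_k$ and $w_k^*$ as a double wedge of opening $\leq 2\beta_{k-1}$ ``around the decision boundary of $\widehat w_{k-1}$,'' and then invoke a margin-tail estimate of the form $c_1\sin\beta_{k-1}\exp(-c_2 db_{k-1}^2/\sin^2\beta_{k-1})$. But the band $S_2^{(k)}=\{|\widehat w_{k-1}\cdot x|>b_{k-1}\}$ is anchored at $\widehat w_{k-1}$, whereas the disagreement wedge of $\widehat w_k$ and $w_k^*$ is located along those two hyperplanes, not $\widehat w_{k-1}$'s; depending on where $\widehat w_k$ and $w_k^*$ fall inside $B_\theta(\widehat w_{k-1},\beta_{k-1})$, that wedge can sit flush against the band boundary, and the available geometric lemma (Lemma 7 of the cited Balcan--Broder--Zhang work) controls exactly $\Pr[(u\cdot x)(v\cdot x)<0,\;|v\cdot x|\geq b]$, i.e.\ the case where the band and one of the two classifiers share the same anchor $v$. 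Your configuration does not have that form, so you cannot apply the estimate off the shelf, and you do not supply a replacement. The paper avoids this by a triangle-inequality step: if $\widehat w_k$ and $w_k^*$ disagree on $x$, then $\widehat w_{k-1}$ must disagree with at least one of them, so $\Pr[\widehat w_k,w_k^*\text{ disagree},x\in S_2^{(k)}]\leq \Pr[(\widehat w_{k-1}\cdot x)(\widehat w_k\cdot x)<0,x\in S_2^{(k)}]+\Pr[(\widehat w_{k-1}\cdot x)(w_k^*\cdot x)<0,x\in S_2^{(k)}]$, each term of which is exactly in the anchored form the lemma handles and is $\leq \sin\beta_{k-1}/(C\cos\beta_{k-1})$, giving the factor of $2$ in the stated bound. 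You should also note that the ``reverse-engineering'' of $\gamma$ from your exponential form is asserted, not verified; in the paper the specific shape of $\gamma$, including the $\ln(1+\sqrt{\ln\max(1,\cot\beta_{k-1})})$ correction, is dictated by the precise statement of that same Lemma 7, so once you anchor at $\widehat w_{k-1}$ the condition $b_{k-1}\geq\gamma$ is exactly the hypothesis needed to invoke it, with no additional calculation.
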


\begin{proof}
Define $S_1^{(k)} := \{x||\widehat w_{k-1}\cdot x|\leq b_{k-1}\}$
and $S_2^{(k)}:= \{x||\widehat w_{k-1}\cdot x|>b_{k-1}\}$.
In Algorithm \ref{alg_main} $D_k\subseteq S_1^{(k)}$.
Since $\theta(\widehat w_{k-1},\widehat w_k), \theta(\widehat w_{k-1},w_k^*)\leq\beta_{k-1}$,
by Lemma 7 in \cite{colt07-active}, we have
\begin{eqnarray*}
\Pr[(\widehat w_{k-1}\cdot x)(\widehat w_k\cdot x)<0,x\in S_2^{(k)}]&\leq& \frac{\sin\beta_{k-1}}{C\cos\beta_{k-1}},\\
\Pr[(\widehat w_{k-1}\cdot x)(w_k^*\cdot x)<0,x\in S_2^{(k)}]&\leq& \frac{\sin\beta_{k-1}}{C\cos\beta_{k-1}}.
\end{eqnarray*}
Adding the two inequalities we get
\begin{equation*}
\Pr[(\widehat w_k\cdot x)(w_k^*\cdot x)<0, x\in S_2^{(k)}] \leq \frac{2\sin\beta_{k-1}}{C\cos\beta_{k-1}}.
\end{equation*}
Subsequently,  we have
\begin{equation}
\err(\widehat w_k)-\err(w_k^*) 
{\leq} (\err(\widehat w_k|S_1^{(k)})-\err(w_k^*|S_1^{(k)}))\Pr[x\in S_1^{(k)}]+ \frac{2\sin\beta_{k-1}}{C\cos\beta_{k-1}}.
\label{eq_passive_sample_complexity}
\end{equation}
%&{\leq} (\err(\widehat w_k|S_1^{(k)})-\err(\widetilde w_k^*|S_1^{(k)}))\Pr[x\in S_1^{(k)}] + \frac{2\sin\beta_{k-1}}{C\cos\beta_{k-1}}

Applying standard sample complexity bounds (e.g., Theorem 8 in \cite{colt07-active}) we have
$$
\err(\widehat w_k|S_1^{(k)})-\err(w_k^*|S_1^{(k)}) = \widetilde O\left(\sqrt{\frac{d+1+\log(1/\delta)}{|D_k|}}\right)
$$
with probability $\geq 1-\delta$.
The proof is then completed by the fact that $\Pr[x\in S_1^{(k)}]\leq \frac{b_{k-1}\sqrt{d}}{2\sqrt{\pi}}$, as shown in Lemma 4 in \cite{colt07-active}.
%Applying Theorem 8  in \cite{colt07-active} to bound the $(\err(\widehat w_k|S_1)-\err(w_k^*|S_1))$ term we complete the proof.

\end{proof}

\begin{cor}
Fix $k$.
Let $T$ be the number of samples obtained and $E$ be the number of iterations.
Suppose $d\geq 4$,
$T\geq 4$,
$E=\frac{1}{2}\log T$,
$n = T/E$.
Assume also that $\beta_k = r^k\pi$ and 
$b_{k-1} = \frac{2\beta_{k-1}}{\sqrt{d}}\sqrt{E(1+\log(1/r))}$
for some constant $r\in(0,1/2)$.
With probability at least $1-\delta$,
\begin{equation}
\err(\widehat w_k)-\err(w_k^*)
\leq \beta_{k-1}\cdot\left[\epsilon'\sqrt{E(1+\log(1/r))} + 2\sqrt{\frac{2}{T}}\right]
=: \beta_{k-1}\epsilon.
\label{eq_opt_passive}
\end{equation}
\label{cor_opt_passive}
\end{cor}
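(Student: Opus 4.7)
The corollary is essentially a mechanical specialization of Lemma 1 to the parameter choices made in Algorithm 1, so the plan is to invoke Lemma 1 with $n=T/E$ and $b_{k-1}=\frac{2\beta_{k-1}}{\sqrt{d}}\sqrt{E(1+\log(1/r))}$, then pick the free constant $C$ in a way that simultaneously (i) makes the ``geometric tail'' term $\frac{2\sin\beta_{k-1}}{C\cos\beta_{k-1}}$ smaller than $2\beta_{k-1}\sqrt{2/T}$ and (ii) still admits $b_{k-1}\geq \gamma(C)$. I would separate out $k=1$ as a trivial case, since there $\beta_0=\pi$, $b_0=+\infty$, and $\mathcal F_1$ is the full hypothesis class, so the desired bound reduces to a standard VC excess-risk bound on $n$ samples, which is weaker than $\beta_0\epsilon=\pi\epsilon$ because $\epsilon$ already contains the factor $\epsilon'\sqrt{E(1+\log(1/r))}\geq\epsilon'$.

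For $k\geq 2$, first I would substitute $n=T/E$ into the conclusion of Lemma 1 to get $\epsilon'=\widetilde O(\sqrt{E(d+1+\log(1/\delta))/T})$, and then compute the first term of that bound:
\begin{equation*}
\epsilon'\cdot\frac{b_{k-1}\sqrt{d}}{2\sqrt{\pi}}
= \epsilon'\cdot\frac{\beta_{k-1}\sqrt{E(1+\log(1/r))}}{\sqrt{\pi}}
\leq \epsilon'\,\beta_{k-1}\sqrt{E(1+\log(1/r))},
\end{equation*}
where the cancellation of $\sqrt{d}$ between the numerator of the Lemma~1 bound and the denominator of $b_{k-1}$ is the whole reason this choice of $b_{k-1}$ was made, and $1/\sqrt{\pi}<1$ handles the remaining factor. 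This matches the first summand inside the bracket of Eq.~(\ref{eq_opt_passive}).

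Next I would bound the second term. Since $r<1/2$ gives $\beta_{k-1}\leq \beta_1=r\pi<\pi/2$, the quantity $\cos\beta_{k-1}$ is bounded below by $\cos(r\pi)>0$, so setting
\begin{equation*}
C = \frac{1}{\cos\beta_{k-1}}\sqrt{T/2}
\end{equation*}
and using $\sin\beta_{k-1}\leq \beta_{k-1}$ immediately yields $\frac{2\sin\beta_{k-1}}{C\cos\beta_{k-1}}\leq 2\beta_{k-1}\sqrt{2/T}$, giving the second summand. Adding the two summands and factoring out $\beta_{k-1}$ gives the stated bound.

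The only nontrivial verification — and what I expect to be the main obstacle — is to check the hypothesis $b_{k-1}\geq \gamma$ of Lemma~1 for this choice of $C$. Using $\sin\beta_{k-1}\leq\beta_{k-1}$, it is enough to verify
\begin{equation*}
E(1+\log(1/r)) \geq \ln C + \ln\!\bigl(1+\sqrt{\ln\max(1,\cot\beta_{k-1})}\bigr).
\end{equation*}
The right-hand side is $O(\log T)$ since $C=\Theta(\sqrt{T})$ and $\cot\beta_{k-1}$ is at most a polynomial in $T$ across iterations, while the left-hand side is $\tfrac{1}{2}\log T\cdot(1+\log(1/r))\geq \log T$ because $r<1/2$ forces $1+\log(1/r)>2$. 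The constant gap between $1+\log(1/r)>2$ and the $\frac{1}{2}\ln 2$ arising from $\ln\sqrt{T}=\tfrac{\ln 2}{2}\log T$ leaves enough slack to absorb the polylog terms, possibly by tightening the implicit constant inside $C$ or by requiring $T\geq 4$ (as assumed) so the $\ln(1+\sqrt{\ln\cot\beta_{k-1}})$ correction is small. Once this prerequisite is established, the two displayed summations above finish the proof.
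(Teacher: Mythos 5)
Your route is essentially the paper's: specialize Lemma~\ref{lem_opt_passive} with $n=T/E$ and the stated $b_{k-1}$, handle $k=1$ separately by noting $\beta_0=\pi>1$ and $\sqrt{E(1+\log(1/r))}\geq 1$, and for $k\geq 2$ bound the two terms of Lemma~\ref{lem_opt_passive} and verify $b_{k-1}\geq\gamma$. The one substantive deviation is your choice of the free parameter: the paper fixes $C=2^E=\sqrt{T}$ (uniform over $k$) and controls the geometric tail via $\tan\beta\leq\sqrt{2}\,\beta$, whereas you bake the $1/\cos\beta_{k-1}$ into $C$ itself. This is a wash rather than an improvement: the paper's $\tan\beta\leq\sqrt{2}\,\beta$ is problematic when $\beta_{k-1}=r\pi$ is near $\pi/2$ (i.e.\ $r$ near $1/2$), and your choice moves exactly that sensitivity into the $b_{k-1}\geq\gamma$ check through the new $-\ln\cos\beta_{k-1}$ term in $\ln C$ — which you do not actually bound, but merely assert is absorbable. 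One concrete misstatement in your sketch: the quantity $\ln\bigl(1+\sqrt{\ln\max(1,\cot\beta_{k-1})}\bigr)$ is not a small correction for general $k$; since $\cot\beta_{k-1}\lesssim 1/\beta_{k-1}=r^{-(k-1)}/\pi$, it scales like $\sqrt{(E-1)\ln(1/r)}$, which for $k$ near $E$ is on the same order as the budget $E\log(1/r)$ itself and is precisely what the $E\log(1/r)$ headroom in $b_{k-1}$ is there to dominate. The paper's own verification, while terse, runs the chain $\gamma\leq \frac{2\beta_{k-1}}{\sqrt d}\sqrt{\ln C+\sqrt{\ln\max(1,1/\beta_{k-1})}}\leq b_{k-1}$ explicitly using $\ln C=E\ln 2< E$ and $\sqrt{(E-1)\ln(1/r)}\leq E\log(1/r)$; you should close your argument with the analogous elementary bounds rather than leave the ``absorb the polylog'' step implicit.
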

\begin{proof}
We first note that Eq.~(\ref{eq_opt_passive}) is trivially true for $k=1$ because 
$\beta_0=\pi>1$, $\sqrt{E(1+\log(1/r))}\geq 1$ and 
$\err(\widehat w_1)-\err(w_1^*) \leq\epsilon'$, following Eq. (\ref{eq_passive_sample_complexity}).
In the remaining part of the proof we will assume that $k\geq 2$, and hence $\beta_{k-1}\in(0,\pi/2)$.

Before continuing the proof, we first give a list of elementary inequalities
that will be used in this proof (assuming $\beta<\pi/2$):
\begin{equation*}
\ln(1+x)\leq x;\quad \sin\beta\leq \beta;\quad \cot\beta\leq 1/\beta;\quad \tan\beta\leq \sqrt{2}\beta.
\end{equation*}

Put $C=2^E = \sqrt{T}$. %/E(d+\log(1/\delta))}$.
Let's first check $b_{k-1}\geq\gamma$:
\begin{eqnarray*}
\gamma &=& \frac{2\sin\beta_{k-1}}{\sqrt{d}}\sqrt{\ln C + \ln(1+\sqrt{\ln\max(1,\cot\beta_{k-1})})}\\
&\leq& \frac{2\beta_{k-1}}{\sqrt{d}}\sqrt{\ln C + \sqrt{\ln\max(1,1/\beta_{k-1})}}\\
&\leq& \frac{2\beta_{k-1}}{\sqrt{d}}\sqrt{E + E\log(1/r)} =: b_{k-1}.
\end{eqnarray*}
%where we set $C = 1/\epsilon'$.

Applying Lemma \ref{lem_opt_passive}, we have
\begin{eqnarray*}
& &\err(\widehat w_k)-\err(w_k^*)\\
&\leq& \epsilon'\frac{\beta_{k-1}}{\sqrt{\pi}}\sqrt{E(1+\log(1/r))} + \frac{2\sqrt{2}\beta_{k-1}}{\sqrt{T}}\\
& \leq & \beta_{k-1}\cdot\left[\epsilon'\sqrt{E(1+\log(1/r))} + 2\sqrt{\frac{2}{T}}\right]
= \beta_{k-1}\epsilon.
\end{eqnarray*}
\end{proof}

With Lemma \ref{lem_opt_passive} and Corollary \ref{cor_opt_passive} we are ready to prove the main theorem.
The key component of the proof is a case analysis on the TNC parameter $\mu$.
By Eq. (\ref{tnc-md}) and the fact that $0<\alpha < 1$, $\mu$ must satisfy $\mu\leq \pi^{-1/(1-\alpha)} \leq \pi^{-\alpha/(1-\alpha)}$.
Since $\alpha\geq 1/(1+\log(1/r))$, we have the following upper bound for $\mu$:
\begin{equation}
\mu\leq \frac{2^{(\log(1/r)\cdot\frac{\alpha}{1-\alpha}-1)E}\cdot r^{-1}}{\pi^{\frac{\alpha}{1-\alpha}}} 
= \frac{2^{-E}r^{-1}}{(r^E\pi)^{\alpha/(1-\alpha)}} 
=  \frac{2^{-E}r^{-1}}{\beta_E^{\alpha/(1-\alpha)}} \leq \frac{r^{-1}\epsilon}{\beta_E^{\alpha/(1-\alpha)}}.
\label{eq_mu_upperbound}
\end{equation}
The last step holds due to $\epsilon \geq 1/\sqrt{T} = 2^{-E}$.

When $\mu$ is sufficiently small, that is, 
\begin{equation}
\mu\leq \frac{r^{-1}\epsilon}{\beta_1^{\alpha/(1-\alpha)}},
\label{eq_mu_small}
\end{equation}
we are done after the first iteration because
\begin{equation}
\err(\widehat w_1)-\err(w_1^*)
\leq \beta_0 \epsilon 
\leq r^{-1}\beta_1\epsilon\\
\leq r^{-1}\left(\frac{\epsilon}{r\mu}\right)^{(1-\alpha)/\alpha}\epsilon
 = \frac{\epsilon^{1/\alpha}}{r^{1/\alpha}\mu^{(1-\alpha)/\alpha}}
\label{eq_small_mu_phase1}
\end{equation}
and with probability $\geq 1-\delta E$ we have $w_1^*=w^*$ and using 
arguments akin to Lemma~\ref{lem3} in Section \ref{sec:upperbound}, we have
\footnote{Note that in this case $\mu< r^{-1}\epsilon/\beta_1^{\alpha/(1-\alpha)}$ for all $k$.}
\begin{eqnarray}
\err(\widehat w_E)-\err(\widehat w_1)
%&=& \sum_{k=2}^E{\err(\widehat w_k)-\err(\widehat w_{k-1})}\\
%&\leq& \sum_{k=2}^E{\err(\widehat w_k) - \err(w_k^*)}\\
%&\leq& \sum_{k=2}^E{\beta_{k-1}\epsilon}\\
%&=& \beta_0\epsilon\sum_{k=2}^E{\frac{\beta_{k-1}}{\beta_0}}\\
%&=& \beta_0\epsilon\sum_{k=2}^E{r^{k-1}}\\
%&\leq& \beta_0\epsilon(r+r^2+\cdots)\\
%&=& 
\leq \frac{r}{1-r}\pi\epsilon.\label{eq_small_mu_phase2}
\end{eqnarray}
Combining Eq. (\ref{eq_small_mu_phase1}) and (\ref{eq_small_mu_phase2}) we prove the theorem.
Next we consider the case when Eq. (\ref{eq_mu_small}) does not hold.
In this case, by Eq. (\ref{eq_mu_upperbound}) there must exists $k^*\in\{1,2,\cdots,E-1\}$ such that
\begin{equation}
\frac{r^{-1}\epsilon}{\beta_{k^*}^{\alpha/(1-\alpha)}}
\leq \mu
\leq \frac{r^{-1}\epsilon}{\beta_{k^*+1}^{\alpha/(1-\alpha)}}.
\label{eq_case_mu}
\end{equation}

The $k^*$ in Eq. (\ref{eq_case_mu}) defines the ``tipping-point" of Algorithm \ref{alg_main} and is crucial to the three technical lemmas
presented in Section \ref{sec:sketch_upperbound}.
Combining Lemma \ref{lem1}, \ref{lem2} and \ref{lem3} we get
\begin{eqnarray*}
&&\err(\widehat w_E)-\err(w^*)  \\
&=& \err(\widehat w_E)-\err(\widehat w_{k^*}) + \err(\widehat w_{k^*})-\err(w^*) \\
&=& \err(\widehat w_E)-\err(\widehat w_{k^*}) + \err(\widehat w_{k^*})-\err(w^*_{k^*}) \\
&\leq& \frac{r}{1-r} \beta_{k^*-1}\epsilon + \beta_{k^*-1}\epsilon\\
&=& \frac{1}{1-r} \beta_{k^*-1}\epsilon \leq \frac{\epsilon^{1/\alpha}}{r^{\frac{1+\alpha}{\alpha}}(1-r)\mu^{\frac{1-\alpha}{\alpha}}};
\end{eqnarray*}
thus, completing proof of the main theorem. 

\subsection{Proofs of technical lemmas in Section \ref{sec:sketch_upperbound}}

\begin{proof}[Proof of Lemma \ref{lem1}]
We know that 
$$\beta_{k^*-1} = r^{-2}\beta_{k^*+1} \leq r^{-2}\left(\frac{\epsilon}{r\mu}\right)^{\frac{1-\alpha}{\alpha}}.$$
Applying Corollary \ref{cor_opt_passive} we get
\begin{equation*}
\err(\widehat w_{k^*})-\err(w_{k^*}^*) \leq  \beta_{k^*-1}\epsilon \\
\leq r^{-2}\left(\frac{\epsilon}{r\mu}\right)^{\frac{1-\alpha}{\alpha}} \epsilon = \frac{\epsilon^{1/\alpha}}{r^{\frac{1+\alpha}{\alpha}}\mu^{\frac{1-\alpha}{\alpha}}}.
\end{equation*}
\end{proof}

\begin{proof}[Proof of Lemma \ref{lem2}]
{
Note that in order to prove $w_k^* = w^*$ we only need to show $\theta(w^*, \widehat w_{k-1})\leq \beta_{k-1}$ 
because the global optimal classifier $w^*$ is the Bayes estimator and is hence optimal for any region $S\subseteq X$.

We now use induction to prove this lemma.
When $k=1$ the claim $w_1^* = w^*$ is clearly true because $\beta_0 = \pi$.
Now assume the claim is true for $k$, that is, $\theta(\widehat w_{k-1},w^*)\leq \beta_{k-1}$ and $w_k^* = w^*$.
We want to prove $\theta(\widehat w_k,w^*)\leq\beta_k$, i.e., $w_{k+1}^* = w^*$.
}

To see this, we apply the TNC condition in Eq. (\ref{tnc-md}) to get
\begin{equation*}
\theta(\widehat w_k,w^*) \leq \left(\frac{\err(\widehat w_k)-\err(w^*)}{\mu}\right)^{1-\alpha} 
%\leq \left(\frac{\err(\widehat w_k)-\err(w^*)}{\mu}\right)^{1-\alpha} .
\end{equation*}

Therefore, we only need to prove 
%\begin{equation*}
$\mu\geq (\err(\widehat w_k)-\err(w^*))/\beta_k^{1/(1-\alpha)} .$
%\end{equation*}

{
Because of the induction assumption $\theta(\widehat w_{k-1}, w^*)\leq \beta_{k-1}$, 
$w^*$ is exactly $w_{k}^*$.
Applying Corollary \ref{cor_opt_passive} %and taking a union bound over all $k\leq k^*\leq E$,
we know that with probability at least $1-\delta$,
\begin{equation*}
\frac{\err(\widehat w_k)-\err(w^*)}{\beta_k^{1/(1-\alpha)}} 
\leq \frac{\beta_{k-1}\epsilon}{\beta_k^{1/(1-\alpha)}}
=\frac{r^{-1}\epsilon}{\beta_k^{\alpha/(1-\alpha)}} \leq \frac{r^{-1}\epsilon}{\beta_{k^*}^{\alpha/(1-\alpha)}}.
\end{equation*}
%The last step holds for all $k \leq k^*$.
The last expression is $\leq \mu$ by applying Eq. (\ref{eq_case_mu}), and hence $\theta(\widehat w_k, w^*) \leq \beta_k$.
Finally, taking a union bound over all $k \leq k^*$ we complete the proof.
}

\end{proof}

\begin{proof}[Proof of Lemma \ref{lem3}]
First, decompose the term $\err(\widehat w_E) - \err(\widehat w_{k^*})$ as follows:
\begin{align*}
&{}\err(\widehat w_E) - \err(\widehat w_{k^*}) \\
&{=} \sum_{k=k^*}^{E-1}{\err(\widehat w_{k+1}) - \err(\widehat w_k)}\\
&{=} \sum_{k=k^*}^{E-1}{(\err(\widehat w_{k+1}|S_1^{(k)}) - \err(\widehat w_k|S_1^{(k)}))\Pr[x\in S_1^{(k)}]}\\
&{}+ (\err(\widehat w_{k+1}|S_2^{(k)}) - \err(\widehat w_k|S_2^{(k)}))\Pr[x\in S_2^{(k)}]\\
&{\leq} \sum_{k=k^*}^{E-1}{(\err(\widehat w_{k+1}|S_1^{(k)}) - \err(w_{k+1}^*|S_1^{(k)}))\Pr[x\in S_1^{(k)}]}\\
&{}+ \Pr[(\widehat w_k\cdot x)(\widehat w_{k+1}\cdot x) < 0, x\in S_2^{(k)}],
\end{align*}
where $S_1^{(k)} = \{x||\widehat w_k\cdot x|\leq b_k\}$ and $S_2^{(k)} = \{x||\widehat w_k\cdot x|\geq b_k\}$.
The last inequality is due to the fact that $w_{k+1}^*$ is the optimal classifier with respect to the conditional distribution on $S_1^{(k)}$.
Next, taking a union bound over all rounds $k^* \leq k \leq E$ and using arguments akin to Lemma \ref{lem_opt_passive} and Corollary \ref{cor_opt_passive},
we have with probability at least $1-\delta E$
\begin{equation*}
\err(\widehat w_E) - \err(\widehat w_{k^*})
\leq \sum_{k=k^*}^{E-1}{\beta_k\epsilon}
\leq \beta_{k^*-1}\epsilon(r+r^2+\cdots)
= \frac{r}{1-r}\beta_{k^*-1}\epsilon.
\end{equation*}
\end{proof}

\subsection{Extension to log-concave data distributions}\label{appsec:logconcave}

In this section we prove Theorem \ref{thm_main_logconcave}.
In particular, we show that Algorithm \ref{alg_main} works if the data distribution $P_{X}$ is isotropic log-concave
and the algorithm parameters are chosen as $\beta_k=r^k\pi$ and $b_{k-1}=C_1\beta_{k-1}\log T$
for some absolute constant $C_1$.

Let $g:\mathbb R^d\to (0,+\infty)$ be the pdf of the underlying data distribution $P_X$.
We say $P_X$ is a \emph{log-concave} distribution if $\log g$ is a concave function.
Furthermore, $P_X$ is \emph{isotropic} if its mean is zero and its covariance matrix is the identity.

For isotropic log-concave densities, we cite the following lemma from \cite{active-log-concave}.
\begin{lem}[Theorem 21, \cite{active-log-concave}]
There exist absolute constants $C_1, C_2, C_3 > 0$ such that the following holds.
Let $u$ and $v$ be two unit vectors in $\mathbb R^d$ and assume $\theta(u, v) = \eta < \pi/2$.
Assume that $D$ is isotropic log-concave, then for any $b \geq C_1\eta$ we have
\begin{equation}
\Pr_{x\sim D}{[(u\cdot x)(v\cdot x) < 0, |v\cdot x|\geq b]} \leq C_2\eta\exp(-C_3b/\eta).
\label{eq_marginerror_logconcave}
\end{equation}
\label{lem_marginerror_logconcave}
\end{lem}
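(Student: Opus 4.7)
My plan is to reduce the claim to a two-dimensional calculation in the plane spanned by $u$ and $v$, and then exploit the standard pointwise exponential bound for isotropic log-concave densities.

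First I would observe that the event $\{(u\cdot x)(v\cdot x)<0,\ |v\cdot x|\ge b\}$ depends on $x$ only through its orthogonal projection $\Pi x$ onto $\mathrm{span}(u,v)$. Since marginals of an isotropic log-concave distribution onto a linear subspace remain isotropic log-concave, I may replace $x$ by $\Pi x$ and work with an isotropic log-concave density $g$ on $\mathbb R^2$.

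Next I would set coordinates so that $v=(1,0)$ and $u=(\cos\eta,\sin\eta)$ and write the event explicitly. A direct case analysis on the signs of $v\cdot x=x_1$ and $u\cdot x=x_1\cos\eta+x_2\sin\eta$ shows that the event is the disjoint union of two wedges
\[
R_+=\{x_1\ge b,\ x_2\le -x_1\cot\eta\}\quad\text{and}\quad R_-=\{x_1\le -b,\ x_2\ge -x_1\cot\eta\},
\]
both lying far from the origin in directions nearly orthogonal to $v$. I would then invoke the standard bound $g(y)\le C\exp(-c\|y\|)$ valid for every isotropic log-concave density on $\mathbb R^2$, together with $\|y\|\gtrsim |x_1|+|x_2|$, to estimate
\[
\Pr[x\in R_+]\ \lesssim\ \int_b^\infty e^{-c'x_1}\!\int_{x_1\cot\eta}^\infty e^{-c'|x_2|}\,d|x_2|\,dx_1\ \lesssim\ \frac{1}{1+\cot\eta}\,\exp\!\bigl(-c'b(1+\cot\eta)\bigr).
\]
Since $\eta\in(0,\pi/2)$ we have $1+\cot\eta\asymp 1/\eta$, giving a bound of the form $\eta\exp(-c''b/\eta)$; the identical estimate applies to $R_-$, and the hypothesis $b\ge C_1\eta$ is used only to absorb additive constants in the exponent.

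The main obstacle I anticipate is recovering the linear factor of $\eta$ in front of the exponential, rather than merely in the exponent. A naive approach using only the one-dimensional tail of $v\cdot x$ or of a perpendicular coordinate produces $\exp(-cb/\eta)$ but loses the prefactor. Extracting it requires genuinely two-dimensional reasoning, using both the angular width $\eta$ of the wedge and its distance $\gtrsim b\cot\eta$ from the origin simultaneously; the $1/(1+\cot\eta)$ factor emerging from the joint integration is exactly what yields the $\eta$ in the final bound.
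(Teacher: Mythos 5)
The paper never proves this lemma: it is imported verbatim as Theorem~21 of \cite{active-log-concave} and used as a black box in Lemma~\ref{lem_opt_passive_logconcave}, so there is no internal argument to compare your sketch against. Evaluated on its own merits, your reduction is sound. Projecting onto $\mathrm{span}(u,v)$ is legitimate because marginals of an isotropic log-concave distribution onto a linear subspace are again isotropic log-concave, and your identification of the event as the union of the two wedges $R_\pm$ is the correct case analysis. Factorizing the pointwise bound via $\|y\|\ge(|x_1|+|x_2|)/2$ to get $g(y)\le Ce^{-c'|x_1|}e^{-c'|x_2|}$ and then integrating does produce $\tfrac{1}{1+\cot\eta}\exp(-c'b(1+\cot\eta))$, and since $\eta(1+\cot\eta)$ is bounded above and below by positive absolute constants on $(0,\pi/2)$, this converts to the required $C_2\eta\exp(-C_3b/\eta)$. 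Your closing remark about why a one-dimensional tail bound loses the $\eta$ prefactor, and that recovering it requires using the angular width and the radial distance of the wedge jointly, is exactly the right diagnosis.

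Two small points worth making explicit. First, you should state that the pointwise bound $g(y)\le Ce^{-c\|y\|}$ holds for \emph{all} $y\in\mathbb R^2$ with absolute constants, not just for large $\|y\|$; this is true because a two-dimensional isotropic log-concave density is uniformly bounded above by an absolute constant, so the inequality near the origin can be absorbed into $C$. Second, your argument seems not to use the hypothesis $b\ge C_1\eta$ at all, which is not an error --- the constraint in the cited theorem is likely an artifact of the regime Balcan and Long care about or of their particular derivation --- but it is worth flagging that you are proving a formally stronger statement, so that the reader does not look for a hidden use of it.
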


Using Lemma \ref{lem_marginerror_logconcave} we can derive a result similar to Corollary \ref{cor_opt_passive}, as shown below.
\begin{lem}
Fix $k$ and let $C_1, C_2, C_3$ be absolute constants in Lemma \ref{lem_marginerror_logconcave}.
Let $T$ be the number of samples obtained and $E$ be the number of iterations.
Suppose $d \geq 2$, $T\geq 4$, $E = \frac{1}{2}\log T$, $n = T/E$.
Assume also that $\beta_k = r^k\pi$ for some constant $r\in(0,1/2)$ and $b_{k-1} = C_1\beta_{k-1}\log T$ for constant $C_1$ in Lemma \ref{lem_marginerror_logconcave}.
Suppose $D_k = \{(x_i,y_i)\}_{i=1}^n$ is a training data set of size $n$ 
and $(x_i,y_i)$ are i.i.d. sampled from $P_X$.
In addition, all $(x_i,y_i)$ in $D_k$ satisfy $|\widehat w_{k-1}\cdot x_i|\leq b_{k-1}$ for some margin classifier $\|\widehat w_{k-1}\| = 1$.
If $P_X$ is isotropic log-concave, then with probability at least $1-\delta$ the following holds:
\begin{equation}
\err(\widehat w_k) - \err(w_k^*) 
\leq \beta_{k-1}\left( \epsilon'\cdot 2C_1\log T + 2C_2T^{-C_1C_3}\right) =: \beta_{k-1}\epsilon,
\label{eq_opt_passive_logconcave}
\end{equation}
{
where $\widehat w_{k-1} = \argmin_{w\in\mathcal F_k}{\widehat{\err}(w|D_k)}$,
$w_k^* = \argmin_{w\in\mathcal F_k}{\err(w|S_1^{(k)})}$,
$\mathcal F_k = B_{\theta}(\widehat w_{k-1}, \beta_{k-1})$,
$S_1^{(k)} = \{x||\widehat w_{k-1}\cdot x|\leq b_{k-1}\}$.
}
\label{lem_opt_passive_logconcave}
\end{lem}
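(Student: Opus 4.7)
The plan is to follow the template of Corollary \ref{cor_opt_passive}, replacing the uniform-distribution tools by ones adapted to isotropic log-concave data. First, I would decompose the excess risk by splitting on the margin band:
\begin{equation*}
\err(\widehat w_k)-\err(w_k^*) \leq \bigl(\err(\widehat w_k|S_1^{(k)})-\err(w_k^*|S_1^{(k)})\bigr)\Pr[x\in S_1^{(k)}] + \Pr\bigl[(\widehat w_k\cdot x)(w_k^*\cdot x)<0,\,x\in S_2^{(k)}\bigr],
\end{equation*}
exactly as in Lemma \ref{lem_opt_passive}, using the optimality of $w_k^*$ on $S_1^{(k)}$ and a triangle-type disagreement argument to pass from $\widehat w_k$ and $w_k^*$ to the common anchor $\widehat w_{k-1}$ on the tail $S_2^{(k)}$.

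Next, I would dispatch the tail term by applying Lemma \ref{lem_marginerror_logconcave} twice: once to the pair $(\widehat w_{k-1},\widehat w_k)$ and once to $(\widehat w_{k-1},w_k^*)$. Both pairs have angle at most $\beta_{k-1}$, and the choice $b_{k-1}=C_1\beta_{k-1}\log T$ satisfies the required precondition $b\geq C_1\beta_{k-1}$ since $T\geq 4$ implies $\log T\geq 1$. Each application yields at most $C_2\beta_{k-1}\exp(-C_3 b_{k-1}/\beta_{k-1})=C_2\beta_{k-1}T^{-C_1C_3}$, and summing gives the $2C_2\beta_{k-1}T^{-C_1C_3}$ piece of $\beta_{k-1}\epsilon$.

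For the first term I would use the classical fact that the one-dimensional marginal of any isotropic log-concave distribution along a unit direction has density bounded above by an absolute constant, so $\Pr[x\in S_1^{(k)}]=\Pr[|\widehat w_{k-1}\cdot x|\leq b_{k-1}]=O(b_{k-1})=O(C_1\beta_{k-1}\log T)$. Conditional on acceptance, the examples in $D_k$ are i.i.d., so a standard agnostic VC-type bound applied to the half-space class restricted to $\mathcal F_k=B_\theta(\widehat w_{k-1},\beta_{k-1})$ yields
\begin{equation*}
\err(\widehat w_k|S_1^{(k)})-\err(w_k^*|S_1^{(k)}) \leq \epsilon' = \widetilde O\!\left(\sqrt{\frac{d+\log(1/\delta)}{n}}\right)
\end{equation*}
with probability at least $1-\delta$. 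Multiplying the two factors produces the $2C_1\epsilon'\beta_{k-1}\log T$ contribution, and combining with the tail bound yields Eq.~(\ref{eq_opt_passive_logconcave}).

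The main obstacle is bookkeeping of the absolute constants: the density-bound constant from the isotropic log-concave one-dimensional marginal must be folded into $C_1$ (or else the $C_1$ in the statement must be defined as the larger of the two), and the very same $C_1,C_2,C_3$ have to be reused consistently inside the tipping-point case analysis (Lemmas \ref{lem1}--\ref{lem3}) when lifting this per-iteration bound to the global $\widetilde O((d/T)^{1/2\alpha})$ guarantee of Theorem \ref{thm_main_logconcave}. Once these constants are pinned down, every remaining step is a direct adaptation of the uniform-case proof and does not require new ideas.
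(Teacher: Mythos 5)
Your proposal is correct and follows essentially the same route as the paper: decompose the excess risk across $S_1^{(k)}$ and $S_2^{(k)}$, bound the tail term by two applications of Lemma~\ref{lem_marginerror_logconcave} with $b_{k-1}=C_1\beta_{k-1}\log T$, bound $\Pr[x\in S_1^{(k)}]$ by the density upper bound for one-dimensional marginals of isotropic log-concave distributions (Lemma 2 in the Balcan--Long reference), and apply a standard VC-type uniform deviation bound conditionally on $S_1^{(k)}$. The only small difference is that the paper invokes the cited lemma to get the clean bound $\Pr[x\in S_1^{(k)}]\leq 2b_{k-1}$, which removes the extra constant you worried about folding into $C_1$; otherwise the arguments coincide.
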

\begin{proof}
Define $S_1^{(k)} := \{x||\widehat w_{k-1}\cdot x|\leq b_{k-1}\}$ and $S_2^{(k)} := \{x||\widehat w_{k-1}\cdot x| > b_{k-1}\}$.
By Algorithm \ref{alg_main} $D_k \subseteq S_1^{(k)}$.
Since $\theta(\widehat w_{k-1}, \widehat w_k), \theta(\widehat w_{k-1}, w_k^*) \leq \beta_{k-1}$, by Lemma \ref{lem_marginerror_logconcave} we have
\begin{eqnarray*}
\Pr[(\widehat w_{k-1}\cdot x)(\widehat w_k\cdot x) < 0, x\in S_2^{(k)}] &\leq& C_2\beta_{k-1}T^{-C_1C_3},\\
\Pr[(\widehat w_{k-1}\cdot x)(w_k^*\cdot x) < 0, x\in S_2^{(k)}] &\leq& C_2\beta_{k-1}T^{-C_1C_3}.
\end{eqnarray*}
Adding the two inequalities we get 
\begin{equation*}
\Pr[(\widehat w_k\cdot x)(w_k^*\cdot x) < 0, x\in S_2^{(k)}] \leq 2C_2\beta_{k-1}T^{-C_1C_3}.
\end{equation*}

Next, let $\widetilde w_k^*$ denote the minimizer of the true surrogate loss error $\err$ on $S_1$ 
over all linear classifiers with $w\in B_{\theta}(\widehat w_{k-1}, \beta_{k-1})$.
Consequently, 
\begin{eqnarray*}
& &\err(\widehat w_k) - \err(w_k^*)\\
&\leq& (\err(\widehat w_k|S_1^{(k)}) - \err(w_k^*|S_1^{(k)}))\Pr[x\in S_1^{(k)}] 
+ 2C_2\beta_{k-1}T^{-C_1C_3}\\
%&\leq& (\err(\widehat w_k|S_1^{(k)}) - \err(\widetilde w_k^*|S_1^{(k)}))\Pr[x\in S_1^{(k)}] 
%+ 2C_2\beta_{k-1}T^{-C_1C_3}\\
&\leq& \epsilon'\cdot 2b_k + 2C_2\beta_{k-1}T^{-C_1C_3},
\end{eqnarray*}
where the last inequaqlity is due to Lemma 2 in \cite{active-log-concave}.

\end{proof}

\begin{proof}[Proof of Theorem \ref{thm_main_logconcave}]
The proof is the same as the proof of Theorem \ref{thm_main} by noting that $\epsilon=\epsilon'\cdot O(\log T)$
is of the order $\widetilde O(\sqrt{d/T})$.
In particular, Lemma \ref{lem1},\ref{lem2} and \ref{lem3} do not require the uniform distribution assumption.
%The sample complexity bounds remain the same, too, if logarithmic factors are ignored.
\end{proof}

\section{Proof of Theorem \ref{thm_lower_bound}}\label{appsec:proof_lowerbound}

In this section we provide a complete proof of Theorem \ref{thm_lower_bound}.
We first prove the main technical lemmas (Lemma \ref{lem_angle_lowerbound} and \ref{lem_kl_upperbound}) in Section \ref{sec:sketch_lowerbound}.

\begin{proof}[Proof of Lemma \ref{lem_angle_lowerbound}]

We first present the following lemma, which will be proved in Appendix \ref{appsec:proof_lowerbound_technical}.
\begin{lem}
Assume $d$ is even.
Let $S$ be the largest subset of $\{0,1\}^d$ that satisfies the following conditions:
\begin{enumerate}
\item $\forall x\in S$, $\|x\|_1 = d/2 =: \omega$.
\item $\forall x,x'\in S$, $x\neq x'$, $\Delta_H(x,x') \geq d/16 =: \delta_H$.
\end{enumerate}
Here $\Delta_H(x,y)=\sum_i{|x_i-y_i|}$ denotes the Hamming distance between $x$ and $y$.
Then for $d\geq 2$, the following lower bound on the size of $S$ holds:
\begin{equation}
\log |S| \geq 0.0625d.
\label{eq_lb_S}
\end{equation}
\label{lem_const_weight_coding_cor}
\end{lem}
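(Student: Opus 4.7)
The plan is to apply a Gilbert--Varshamov-style existence argument restricted to constant-weight vectors. Let $V = \{x \in \{0,1\}^d : \|x\|_1 = \omega\}$ with $\omega = d/2$, so that $|V| = \binom{d}{d/2}$. I would form $S$ greedily: while $V \neq \emptyset$, pick any $x \in V$, add $x$ to $S$, and delete from $V$ every $x' \in V$ with $\Delta_H(x, x') < \delta_H = d/16$. The output $S$ satisfies the weight and distance constraints by construction, and $|S|$ is at least $|V|$ divided by the maximum number of weight-$\omega$ vectors that can lie within Hamming distance less than $\delta_H$ of a single weight-$\omega$ anchor.

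The next step is to count these neighbors. A weight-$\omega$ vector $x'$ at Hamming distance exactly $2k$ from $x$ is obtained by flipping $k$ of the $\omega$ ones in $x$ to zeros and $k$ of the $\omega$ zeros in $x$ to ones, yielding exactly $\binom{\omega}{k}^2$ choices; all such distances are even. Hence the number of bad neighbors is bounded by
\begin{equation*}
M \;\le\; \sum_{k=0}^{\lfloor d/32 \rfloor} \binom{d/2}{k}^{2}.
\end{equation*}
Using the standard entropy estimate $\binom{d/2}{k} \le 2^{(d/2) H(2k/d)}$ with $H$ the binary entropy function, and noting that for $k \le d/32$ we have $H(2k/d) \le H(1/16) < 0.34$, I would bound $M \le (d/32 + 1)\cdot 2^{0.34 d}$. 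Combining with $\binom{d}{d/2} \ge 2^d/(d+1)$ from Stirling, the greedy argument gives $|S| \ge 2^{0.66d}/\mathrm{poly}(d)$, and in particular $\log |S| \ge 0.0625 d$ for all sufficiently large even $d$.

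Finally, to cover the small-$d$ regime I would just verify the bound directly: for $d = 2$, the two weight-$1$ vectors already form a valid $S$ with $\log |S| = 1 \ge 0.125$, and similarly small even $d$ can be dispatched either by hand or by noting that the slack in the entropy bound is enormous ($0.66$ vs.\ the required $0.0625$). The only mild obstacle I anticipate is reconciling the polynomial prefactors with the clean linear target $0.0625 d$ at small $d$; this is handled by the case split between an asymptotic entropy argument and a finite check, rather than by any deep computation. Once $|S|$ is established, it serves as the index set for the hypothesis family $\mathcal W = \{w_1^*, \ldots, w_m^*\}$ in Lemma \ref{lem_angle_lowerbound} via a standard embedding of binary codewords of weight $d/2$ into unit vectors on $\mathcal S^d$, after which the $\theta(w_i^*, w_j^*) \in [t, 6.5 t]$ sandwich follows from translating Hamming distance to angular distance.
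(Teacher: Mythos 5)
Your proposal follows essentially the same route as the paper: both rest on a Gilbert--Varshamov-type volume bound for constant-weight codes (the paper cites it from a reference on constant-weight codings, you rederive it via a greedy deletion argument), then lower-bound $\binom{d}{d/2}$, upper-bound the constant-weight Hamming ball by a sum of $\binom{d/2}{k}\binom{d/2}{k}$ terms, and finish with a case split on small $d$ where the Hamming constraint $\Delta_H\ge d/16$ is vacuous. The only cosmetic difference is that you use the binary-entropy bound $\binom{n}{k}\le 2^{nH(k/n)}$ where the paper uses $\binom{n}{k}\le (en/k)^k$; both yield comfortable slack over the target constant $0.0625$.
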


Using Lemma \ref{lem_const_weight_coding_cor} we can construct a well-separated hypothesis set that satisfies Eq. (\ref{eq_angle_lowerbound}) as follows.
Suppose $S=\{z_1,\cdots,z_m\}\subseteq\{0,1\}^d$ is the largest subset of $\{0,1\}^d$ that satisfies the two conditions in Lemma \ref{lem_const_weight_coding_cor}.
Set $a=4t$. For each $i\in\{1,2,\cdots,m\}$ define $w_i^*\in\mathbb R^d$ as
\begin{equation}
w_i^* = \frac{1}{Z}\left((1,1,\cdots,1) - az_i\right),
\label{eq_w}
\end{equation}
where $Z=\sqrt{d\left(1-a+\frac{a^2}{2}\right)}$ is a normalization constant to make $\|w_i^*\|=1$.
Note that $Z$ can be upper  and lower bounded by $\frac{1}{2}d\leq Z^2\leq d$ because $\|z_i\|_1=\frac{d}{2}$.
Define $\mathcal W=\{w_1^*,\cdots,w_m^*\}$.
Since $|\mathcal W|=|S|$, by Lemma \ref{lem_const_weight_coding_cor} 
we have $\log |\mathcal W|\geq 0.0625d$ for $d\geq 2$.

Next we prove that $\mathcal W$ is a well-separated hypothesis set that satisfies Eq. (\ref{eq_angle_lowerbound}).
That is, for every pair $w_i^*,w_j^*\in\mathcal W,i\neq j$ one has $t\leq\theta(w_i^*,w_j^*)\leq 6.5t$.
First consider the lower bound.
A key observation is that two mismatched entries in $z_i$ and $z_j$ reduces the inner product between $w_i^*$ and $w_j^*$ by $a^2$, as shown in Eq. (\ref{eq_illustr}).
{Note that such ``switching" is always possible because both $z_i$ and $z_j$ have the same number of ones and hence the Hamming distance $\Delta_H(z_i,z_j)$ is even.}
\begin{equation}
2-2a+a^2 = \left\lgroup
\begin{array}{ccc} 
1-a& & 1\\
\times& +& \times\\
1-a& & 1\\\end{array}\right\rgroup
\Longrightarrow
\left\lgroup
\begin{array}{ccc}
1-a& & 1\\
\times& +& \times\\
1& & 1-a\\\end{array}\right\rgroup
= 2-2a.
\label{eq_illustr}
\end{equation}
By definition of $w_i^*$ and $w_j^*$ we have
\begin{equation}
\cos\theta(w_i^*,w_j^*) = \langle w_i,w_j\rangle = \frac{d\left(1-a+\frac{a^2}{2}\right) - \frac{\Delta_H(z_i,z_j)}{2}a^2}{d\left(1-a+\frac{a^2}{2}\right)}
\leq 1-\frac{a^2}{32}.
\end{equation}
Subsequently, using the fact that $\cos\theta \geq 1-\frac{\theta^2}{2}$ we have
\begin{equation}
\theta(w_i^*,w_j^*) \geq \sqrt{2(1-\cos\theta(w_i^*,w_j^*))} \geq \frac{a}{4} = t.
\end{equation}

On the other hand, to obtain an upper bound on $\theta(w_i^*,w_j^*)$, we note the following lower bound on $\cos\theta(w_i^*,w_j^*)$:
\begin{equation}
\cos\theta(w_i^*,w_j^*) = \langle w_i,w_j\rangle = \frac{d\left(1-a+\frac{a^2}{2}\right)-\frac{\Delta_H(z_i,z_j)}{2}a^2}{d\left(1-a+\frac{a^2}{2}\right)}
\geq 1-\frac{a^2/2}{1/2} = 1-a^2.
\end{equation}
The second to last inequality is due to the fact that $\Delta_H(z_i,z_j)\leq d$ and $Z^2\geq \frac{d}{2}$.
By Taylor expansion one has $\cos\theta \leq 1-\frac{\theta^2}{2}+\frac{\theta^4}{24} \leq 1-\frac{\theta^2}{2}+\theta^2\frac{\theta^2}{24}$.
Since $\langle w_i,w_j\rangle > 0$ the angle $\theta$ must be smaller than $\frac{\pi}{2}$.
Consequently, $\cos\theta\leq 1-\frac{7}{18}\theta^2$ and therefore we have the following upper bound on $\theta(w_i^*,w_j^*)$:
\begin{equation}
\theta(w_i^*,w_j^*) \leq \sqrt{\frac{18}{7}(1-\cos(\theta(w_i^*,w_j^*)))} \leq \sqrt{\frac{18}{7}}a \leq 6.5t.
\end{equation}

\end{proof}

\begin{proof}[Proof of Lemma \ref{lem_kl_upperbound}]
We first prove that $w_i^*$ is the Bayes classifier for $P_{Y|X}^{(i)}$ and furthermore
$P_{Y|X}^{(i)}$ satisfies the TNC condition in Eq. (\ref{tnc-angle}) with respect to $w_i^*$.
%To prove $w_i^*$ is the Bayes classifier for $P_{Y|X}^{(i)}$, 
note that $\eta(x)=\frac{1}{2}$ if and only if $|\varphi(x,w_1^*)|\leq 6.5t$ and $\varphi(x,w_i^*)=0$.
Therefore, $w_i^*$ is the Bayes classifier for $P_{Y|X}^{(i)}$.
We also note that if $|\varphi(x,w_1^*)|\geq 6.5t$ then $\sgn(\varphi(x,w_1^*))=\sgn(\varphi(x,w_i^*))$ 
because $\theta(w_1^*,w_i^*)\leq 6.5t$.
Without loss of generality, we assume in the remainder of the proof that $\varphi(x,w_1^*), \varphi(x,w_i^*)\geq 0$ when $|\varphi(x,w_1^*)|>6.5t$.

Next we prove that $P_{Y|X}^{(i)}$ satisfies the TNC condition with respect to $w_i^*$.
Consider a data point $x$ with $\varphi(x,w_1^*) = \vartheta$ and $\varphi(x,w_i^*) = \vartheta'$.
By definition, if $|\vartheta|\leq 6.5t$ then $P_{Y|X}^{(i)}$ trivially satisfies TNC because $P_{Y|X}^{(i)}(Y=1|X=x)$
only depends on $\vartheta'$.
For $|\vartheta|>6.5t$ and $\vartheta,\vartheta'>0$, note that $0.5\vartheta'\leq \vartheta$ because $|\vartheta-\vartheta'|\leq 6.5t$
and hence $\vartheta' \leq \vartheta+6.5t < 2\vartheta$.
Consequently, $P_{Y|X}^{(i)}$ satisfies
\begin{equation}
P_{Y|X}^{(i)}(Y=1|X=x)-\frac{1}{2} = \min\{1/2, 2^{\frac{\alpha}{1-\alpha}}\mu_0\vartheta^{\frac{\alpha}{1-\alpha}}\}
\geq \min\{1/2, \mu_0\vartheta'^{\frac{\alpha}{1-\alpha}}\} \geq \mu_0\vartheta'^{\frac{\alpha}{1-\alpha}}.
\end{equation}
The last inequality holds due to the fact that Eq. (\ref{tnc-angle}) holds for all $\vartheta\in[0,\pi]$.
Therefore, $P_{Y|X}^{(i)}$ satisfies the TNC lower bound for $|\vartheta|>6.5t$ with respect to $w_i^*$.

Lastly, we prove the upper bound on $\kl(P_{i,T}\|P_{j,T})$ in Eq. (\ref{eq_kl_upperbound}).
Here $P_{i,T}$ represents the data/label distribution for an active learning algorithm that is allowed for $T$ label queries
 under $P_{Y|X}^{(i)}$.
Mathematically, $P_{i,T}$ is a distribution over $(\mathcal X\times\mathcal Y)^T$ and can be decomposed as
\begin{multline*}
P_{i,T}(x_1,y_1,\cdots,x_T,y_T) 
= P_{X_1,Y_1,\cdots,X_T,Y_T}^{(i)}(x_1,y_1,\cdots,x_T,y_T)\\
= \prod_{t=1}^T{P_{Y_t|X_t}^{(i)}(y_t|x_t)P_{X_t|X_1,Y_1,\cdots,X_{t-1},Y_{t-1}}(x_t|x_1,y_1,\cdots,x_{t-1},y_{t-1})}
\end{multline*}
because a query synthetic algorithm $A\in\mathcal A_{d,T}^{\prob}$ proposes data points and requests labels in a sequential, feedback-driven manner.
Note that $P_{X_t|X_1,Y_1,\cdots,X_{t-1},Y_{t-1}}$ does not depend on $i$ because the underlying label distribution $P_{Y|X}^{(i)}$
is unknown to $A$.
Subsequently, we establish an upper bound on $\kl(P_{i,T}\|P_{j,T})$ following analysis in \cite{active-minimax}:
($\mathbb E_i$ and $\mathbb E_j$ denote the expectation taking with respect to $P_{i,T}$ and $P_{j,T}$)
\begin{eqnarray}
\kl(P_{i,T}\|P_{j,T})
&=& \mathbb E_i\left[\log\frac{P_{X_1,Y_1,\cdots,X_T,Y_T}^{(i)}(x_1,y_1,\cdots,x_T,y_T)}{P_{X_1,Y_1,\cdots,X_T,Y_T}^{(j)}(x_1,y_1,\cdots,x_T,y_T)}\right]\nonumber\\
&=& \mathbb E_i\left[\log\frac{\prod_{t=1}^T{P_{Y_t|X_t}^{(i)}(y_t|x_t)P_{X_t|X_1,Y_1,\cdots,X_{t-1},Y_{t-1}}(x_t|x_1,y_1,\cdots,x_{t-1},y_{t-1})}}{\prod_{t=1}^T{P_{Y_t|X_t}^{(j)}(y_t|x_t)P_{X_t|X_1,Y_1,\cdots,X_{t-1},Y_{t-1}}(x_t|x_1,y_1,\cdots,x_{t-1},y_{t-1})}}\right]\nonumber\\
&=& \mathbb E_i\left[\log\frac{\prod_{t=1}^T{P_{Y_t|X_t}^{(i)}(y_t|x_t)}}{\prod_{t=1}^T{P_{Y_t|X_t}^{(j)}(y_t|x_t)}}\right]\nonumber\\
&=& \sum_{t=1}^T{\mathbb E_i\left[\mathbb E_i\left[\log\frac{P_{Y|X}^{(i)}(y_t|x_t)}{P_{Y|X}^{(j)}(y_t|x_t)}\Bigg| X_1=x_1,\cdots,X_T=x_T\right]\right]}\nonumber\\
&\leq& T\cdot\sup_{x\in\mathcal X}\kl(P_{Y|X}^{(i)}(\cdot|x)\|P_{Y|X}^{(j)}(\cdot|x)).
\label{eq_ub_kl}
\end{eqnarray}
Eq. (\ref{eq_ub_kl}) shows that the upper bound on $\kl(P_{i,T}\|P_{j,T})$ does not depend on which active learning algorithm is used,
though both $P_{i,T}$ and $P_{j,T}$ are defined in an algorithm-dependent way.
Note that both $P_{Y|X}^{(i)}(\cdot|x)$ and $P_{Y|X}^{(j)}(\cdot|x)$ are Bernoulli random variables.
To bound their KL divergence we cite the following result from \cite{active-minimax}:
\begin{lem}[\cite{active-minimax}, Lemma 1]
Let $P$ and $Q$ be Bernoulli random variables with parameters $1/2-p$ and $1/2-q$, respectively.
If $|p|,|q|\leq 1/4$ then $\kl(P\|Q)\leq 8(p-q)^2$.
\label{lem_bernoulli_kl}
\end{lem}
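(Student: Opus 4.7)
The plan is to treat $\kl(P\|Q)$ as a smooth function of the first Bernoulli parameter while holding the second fixed, and to apply Taylor's theorem to second order. Writing $a = 1/2 - p$ and $b = 1/2 - q$, I would define
\[
f(a) := a \log\tfrac{a}{b} + (1-a)\log\tfrac{1-a}{1-b},
\]
so that $f(a) = \kl(P\|Q)$. Direct differentiation gives $f'(a) = \log\tfrac{a}{1-a} - \log\tfrac{b}{1-b}$ and $f''(a) = \tfrac{1}{a(1-a)}$, and in particular $f(b) = 0$ and $f'(b) = 0$.

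Next, I would invoke Taylor's theorem with Lagrange remainder: there exists $\xi$ between $a$ and $b$ such that $f(a) = \tfrac{1}{2} f''(\xi)(a-b)^2$. The hypothesis $|p|, |q| \leq 1/4$ forces both $a$ and $b$, and hence the entire segment joining them, into the interval $[1/4, 3/4]$. On that interval $x(1-x) \geq (1/4)(3/4) = 3/16$, so $f''(\xi) \leq 16/3$, and therefore
\[
\kl(P\|Q) = f(a) \leq \tfrac{8}{3}(a-b)^2 = \tfrac{8}{3}(p-q)^2 \leq 8(p-q)^2,
\]
which is the claimed bound.

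The proof is essentially a one-line application of Taylor's theorem, so there is no substantive obstacle. The only technicality is verifying that the intermediate point $\xi$ lies in the regime where $f''$ is controlled; this follows immediately from $a, b \in [1/4, 3/4]$ together with the fact that the Lagrange-remainder point sits on the segment between them. Note that this argument actually yields the sharper constant $8/3$; the authors presumably round up to $8$ for convenience, since any absolute constant is equally usable in the downstream lower-bound calculation.
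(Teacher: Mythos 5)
Your proof is correct. Note, though, that the paper does not prove this lemma itself — it is cited verbatim from the reference \cite{active-minimax} (their Lemma 1) and used as a black box — so there is no in-paper argument to compare against. Your second-order Taylor expansion with Lagrange remainder is a clean and standard way to obtain it: $f(b)=f'(b)=0$, $f''(\xi)=1/(\xi(1-\xi))\leq 16/3$ on $[1/4,3/4]$, giving $\kl(P\|Q)\leq\tfrac{8}{3}(p-q)^2\leq 8(p-q)^2$, and you correctly check that the hypothesis $|p|,|q|\leq 1/4$ confines both Bernoulli parameters (and hence $\xi$) to $[1/4,3/4]$. One minor remark: the paper's notational convention is that $\log$ denotes $\log_2$, so if the stated $\kl$ is in bits rather than nats the constant shrinks by a further factor of $\ln 2$, which only makes the claimed inequality more comfortable; your argument as written uses natural logarithm, which is the convention under which the cited lemma is usually stated.
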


Fix $x\in\mathcal X=\mathcal S^d$ and let $\vartheta_i = \varphi(x,w_i^*)$, $\vartheta_j=\varphi(x,w_j^*)$ and $\vartheta_1 = \varphi(x,w_1^*)$.
%Without loss of generality assume $\theta_1, \theta_i, \theta_j \leq 0$.
Suppose $P_{Y|X}^{(i)}(\cdot|x) = 1/2-p_i$ and $P_{Y|X}^{(j)}(\cdot|x)=1/2-p_j$.
A simple case study on $\vartheta_1$ reveals that $|p_i-p_j|=0$ when $|\vartheta_1|>6.5t$ and
\begin{equation}
\big|p_i-p_j\big|\leq c\cdot\left(\big|\vartheta_i\big|^{\alpha/(1-\alpha)}+\big|\vartheta_j\big|^{\alpha/(1-\alpha)}\right)\leq C\cdot t^{\alpha/(1-\alpha)}
\end{equation}
when $|\vartheta_1|\leq 6.5t$ (which implies $|\vartheta_i|,|\vartheta_j|\leq 13t$).
Here $c=2^{\alpha/(1-\alpha)}\mu_0$ and $C= 2\times 13^{\alpha/(1-\alpha)}c$ are constants that do not depend on $t$ or $T$.
Furthermore, for sufficiently small $t$ one has $|p_i|,|p_j|\leq 1/4$.
Therefore, by Lemma \ref{lem_bernoulli_kl} the following holds:
\begin{equation}
\kl(P_{i,T}\|P_{j,T}) \leq T\cdot\sup_{x\in\mathcal S^d}\kl(P_{Y|X}^{(i)}(\cdot|x)\|P_{Y|X}^{(j)}(\cdot|x)) \leq 8C^2\cdot  Tt^{2\alpha/(1-\alpha)}.
\end{equation}
\end{proof}

To prove Theorem \ref{thm_lower_bound}, we cite the following information-theoretical lower bound from \cite{tsybakov-book}.
\begin{thm}[\cite{tsybakov-book}]
Let $\mathcal F$ be a set of models and $\mathcal F_0=\{f_1,\cdots,f_M\}\subseteq\mathcal F$
be a finite subset of $\mathcal F$.
We have a probability measure $P_f$ defined on a common probability space associated with each model $f\in\mathcal F$.
Let $D:\mathcal F\times\mathcal F\to\mathbb R$ be a collection of semi-distances.
If there exist constants $\rho>0$ and $0<\gamma<1/8$ such that the following holds:
\begin{enumerate}
\item $D(f_j,f_k)\geq 2\rho>0$ for every $j,k\in\{1,\cdots,M\}$, $j\neq k$.
\item $P_{f_j}\ll P_{f_1}$ for every $j\in\{1,\cdots,M\}$.
\footnote{For two distributions $P$ and $Q$, $P\ll Q$ means the support of $P$ is contained in the support of $Q$.}
\item $\frac{1}{M}\sum_{j=1}^{M}{\kl(P_{f_j}\|P_{f_0})} \leq \gamma\log M$.
\end{enumerate}
Then the following bound holds:
\begin{equation}
\inf_{\widehat f}{\sup_{f\in\mathcal F}}P_f\left(D(\widehat f,f)\geq \rho\right)
\geq \frac{\sqrt{M}}{1+\sqrt{M}}\left(1-2\gamma-2\sqrt{\frac{\gamma}{\log M}}\right).
\end{equation}
\label{thm_infor_lowerbound}
\end{thm}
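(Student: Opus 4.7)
The plan is to prove this classical information-theoretic lower bound by a standard two-step reduction: from estimation under the semi-distance $D$ down to multi-hypothesis testing, and then from testing down to a quantitative bound involving only the averaged Kullback-Leibler divergences, via a truncation argument on likelihood ratios.

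For the reduction to testing, given any estimator $\widehat f$ I would define the minimum-distance test $\psi(\widehat f) := \argmin_{1 \leq j \leq M}{D(\widehat f, f_j)}$, ties broken arbitrarily. Since $D$ is a semi-distance and $D(f_j, f_k) \geq 2\rho$ for $j \neq k$, whenever $D(\widehat f, f_j) < \rho$ under the true model $f_j$, the triangle inequality gives $D(\widehat f, f_k) \geq D(f_j, f_k) - D(\widehat f, f_j) > \rho$ for every $k \neq j$, so $\psi(\widehat f) = j$. Taking contrapositives, $P_{f_j}(\psi \neq j) \leq P_{f_j}(D(\widehat f, f_j) \geq \rho)$, hence
\begin{equation*}
\inf_{\widehat f}\sup_{f \in \mathcal F_0}{P_f(D(\widehat f, f) \geq \rho)} \geq p_{e,M}^{*} := \inf_{\psi}\max_{1 \leq j \leq M}{P_{f_j}(\psi \neq j)},
\end{equation*}
where the right-hand infimum ranges over tests $\psi$ valued in $\{1,\ldots,M\}$.

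For the testing lower bound, I would pass to the Bayes risk under the uniform prior on $\{1,\ldots,M\}$ and write the Bayes-optimal test through the likelihood ratios $\Lambda_j := dP_{f_j}/dP_{f_1}$, which are well-defined by the absolute-continuity assumption. A change-of-measure argument combined with a truncation of each $\Lambda_j$ at a threshold $\tau > 0$ (split according to $\{\Lambda_j \leq \tau\}$ versus $\{\Lambda_j > \tau\}$) produces an intermediate inequality of the form
\begin{equation*}
p_{e,M}^{*} \geq \frac{\tau M}{1+\tau M}\left(1 - \frac{1}{M}\sum_{j=1}^{M}{P_{f_j}(\Lambda_j \leq \tau)}\right).
\end{equation*}
The choice $\tau = 1/\sqrt{M}$ produces the characteristic prefactor $\sqrt{M}/(1+\sqrt{M})$ in the stated bound.

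The main obstacle is the final quantitative step: bounding $\frac{1}{M}\sum_{j=1}^{M}{P_{f_j}(\Lambda_j \leq 1/\sqrt M)}$ using only the averaged KL hypothesis. Rewriting $P_{f_j}(\Lambda_j \leq 1/\sqrt M) = P_{f_j}\bigl(-\log \Lambda_j \geq \tfrac{1}{2}\log M\bigr)$ and exploiting the identity $\mathbb E_{f_j}[\log \Lambda_j] = \kl(P_{f_j}\|P_{f_1})$, a Chebyshev-type deviation bound applied to the centered variable $-\log \Lambda_j + \kl(P_{f_j}\|P_{f_1})$ and then averaged over $j$, combined with the assumption $M^{-1}\sum_{j}{\kl(P_{f_j}\|P_{f_1})} \leq \gamma \log M$, yields a two-term bound of exactly $2\gamma + 2\sqrt{\gamma/\log M}$. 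The delicate piece is the variance contribution: a bare Markov inequality would only recover the $2\gamma$ term, so one must apply Cauchy--Schwarz to the centered log-likelihood (or equivalently invoke Chebyshev against its second moment) to produce the $\sqrt{\gamma/\log M}$ correction. Assembling this tail bound with the previous display and the estimation-to-testing reduction gives exactly $\tfrac{\sqrt{M}}{1+\sqrt{M}}\bigl(1 - 2\gamma - 2\sqrt{\gamma/\log M}\bigr)$, completing the proof.
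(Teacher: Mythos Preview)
The paper does not prove this theorem at all; it is quoted directly from Tsybakov's monograph and used as a black box in the proof of Theorem~\ref{thm_lower_bound}. There is therefore no ``paper's own proof'' to compare against.

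For completeness: your outline is essentially Tsybakov's argument --- reduction from estimation to $M$-ary testing via the minimum-distance rule, then the likelihood-ratio truncation inequality with $\tau=M^{-1/2}$, then a tail bound on the log-likelihood ratios averaged over $j$. The one place where your description is slightly off is the last step. You propose a ``Chebyshev-type deviation bound'' on the centered variable $-\log\Lambda_j+\kl(P_{f_j}\|P_{f_1})$, but no second-moment control on $\log\Lambda_j$ is available from the KL hypothesis alone, so Chebyshev cannot be invoked directly. In Tsybakov's proof the tail $P_{f_j}(\Lambda_j\leq M^{-1/2})=P_{f_j}\bigl((-\log\Lambda_j)_+\geq\tfrac12\log M\bigr)$ is handled by Markov's inequality on the \emph{positive part} $(-\log\Lambda_j)_+$, together with the elementary first-moment estimate $\mathbb E_{f_j}\bigl[(-\log\Lambda_j)_+\bigr]\leq\sqrt{K_j/2}$ (which follows from $\log x\leq x-1$ and Pinsker's inequality, with $K_j=\kl(P_{f_j}\|P_{f_1})$). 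Averaging over $j$ and applying Cauchy--Schwarz to pass from $\tfrac1M\sum_j\sqrt{K_j}$ to $\sqrt{\tfrac1M\sum_j K_j}$ then produces the $\sqrt{\gamma/\log M}$ term. So the architecture of your proposal is correct, but the mechanism in the final step is a first-moment/Pinsker argument rather than a second-moment one.
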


With Lemma \ref{lem_angle_lowerbound}, \ref{lem_kl_upperbound} and Theorem \ref{thm_infor_lowerbound}
 we can prove Theorem \ref{thm_lower_bound} and Corollary \ref{cor_lower_bound} easily.
\begin{proof}[Proof of Theorem \ref{thm_lower_bound}]
We take $\mathcal F_0=\{P_{Y|X}^{(1)},\cdots,P_{Y|X}^{(M)}\}$
and $D(w,w')=\theta(w,w')$ in Theorem \ref{thm_infor_lowerbound}.
By Lemma \ref{lem_angle_lowerbound}, $D(w_i^*,w_j^*)\geq t$
and $\log M = \log|\mathcal W|\geq 0.0625d$ for $d\geq 2$.
In addition, Lemma \ref{lem_kl_upperbound} yields $\gamma\log M = 8C^2\cdot Tt^{2\alpha/(1-\alpha)}$.
Put $t=\kappa\cdot (d/T)^{(1-\alpha)/(2\alpha)}$ for some sufficiently small constant $\kappa>0$.
We then have $\gamma=O(1)$ and $\sqrt{\gamma/\log M}=O(1)$.
Consequently,
$$
\inf_{A\in\mathcal A_{d,T}^{\prob}}\sup_{P_{Y|X}\in\mathcal P_{\alpha,\mu_0}}\Pr\left(\theta(\widehat w,w^*)\geq \frac{t}{2}\right) = \Omega(1).
$$
Finally, applying Markov's inequality we obtain
$$
\inf_{A\in\mathcal A_{d,T}^{\prob}}\sup_{P_{Y|X}\in\mathcal P_{\alpha,\mu_0}}\mathbb E[\theta(\widehat w,w^*)]
\geq \frac{t}{2}\cdot\Omega(1) = \Omega\left((d/T)^{(1-\alpha)/2\alpha}\right).
$$
\end{proof}

\begin{proof}[Proof of Corollary \ref{cor_lower_bound}]
Suppose the density $g$ associated with $P_{X}$ is bounded from below with parameter $\gamma\in(0,1]$ (see Proposition \ref{prop_regression-excess-TNC}).
Let $\mathcal Q_{\alpha,\mu_0}$ denote the class of all conditional distributions $P_{Y|X}$ that satisfy Eq. (\ref{tnc-angle}) with parameter
$\mu_0=\mu/(2(1-\alpha)\gamma)$.
Note that $\mu_0$ does not depend on $d$ or $T$ when $\gamma$ is a constant.
By Proposition \ref{prop_regression-excess-TNC}, $\mathcal Q_{\alpha,\mu_0}\subseteq\mathcal P_{\alpha,\mu}$.
We then have
\begin{equation*}
\inf_{A\in\mathcal A_{d,T}^{\prob}}\sup_{P_{Y|X}\in\mathcal P_{\alpha,\mu}}{\mathbb E[\err(\widehat w)-\err(w^*)]}
\geq \inf_{A\in\mathcal A_{d,T}^{\prob}}\sup_{P_{Y|X}\in\mathcal Q_{\alpha,\mu_0}}{\mathbb E[\theta(\widehat w,w^*)^{1/(1-\alpha)}]}
= \Omega\left(\left(\frac{d}{T}\right)^{1/2\alpha}\right).
\end{equation*}
The first inequality is by Eq. (\ref{tnc-md}).
The second one is due to Jensen's inequality and Theorem \ref{thm_lower_bound}.
%Finally, applying Proposition \ref{prop_pool_probe_reduction} we complete the proof.
\end{proof}

\subsection{Proof of technical lemmas in Appendix \ref{appsec:proof_lowerbound}}\label{appsec:proof_lowerbound_technical}

In this section we prove Lemma \ref{lem_const_weight_coding_cor} used in the proof of Lemma \ref{lem_angle_lowerbound} in Appendix \ref{appsec:proof_lowerbound}.
We first cite a result from \cite{const-weight-coding} concerning the size of the largest separable set of constant-weight codes.
\begin{lem}[Theorem 7, \cite{const-weight-coding}]
Fix $d$ and $\omega\leq d$.
Let $X_{d,\omega}$ denote all $\omega$-weight binary codes of length $d$, that is,
$X_{d,\omega}=\{x\in\{0,1\}^d|\sum_{i=1}^d{x_i} = \omega\}$.
For any $\delta_H\leq d$, there exists a subset $S$ of $X_{d,\omega}$ such that
\begin{equation}
\Delta_H(x,x')\geq \delta_H,\quad\forall x,x'\in S, x\neq x'
\end{equation}
and 
\begin{equation}
\big|S\big| \geq \frac{\binom{d}{\omega}}{\sum_{i=0}^{\delta_H/2}{\binom{\omega}{i}\binom{d-\omega}{i}}}.
\end{equation}
Here $\Delta_H(\cdot,\cdot)$ denotes the Hamming distance.
\label{lem_const_weight_coding}
\end{lem}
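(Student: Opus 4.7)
The plan is to prove the bound via a classical Gilbert-Varshamov style greedy sphere-packing argument, tailored to the constant-weight ambient set $X_{d,\omega}$. The first observation, which drives the whole counting, is that any two codewords $x, x' \in X_{d,\omega}$ have an even Hamming distance: if $x'$ is obtained from $x$ by flipping $i$ of its $\omega$ ones to zeros, then preserving total weight forces flipping exactly $i$ of the $d - \omega$ zeros to ones, so $\Delta_H(x, x') = 2i$. Consequently, the number of codewords in $X_{d,\omega}$ at distance exactly $2i$ from a fixed $x$ equals $\binom{\omega}{i}\binom{d-\omega}{i}$, where the two factors independently count the flipped ones and the flipped zeros.

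From this count, I would define the relative Hamming ball $B(x) = \{x' \in X_{d,\omega} : \Delta_H(x, x') < \delta_H\}$ and bound its size uniformly in $x$ by
\[
|B(x)| \;\leq\; \sum_{i=0}^{\delta_H/2}\binom{\omega}{i}\binom{d-\omega}{i},
\]
where the upper limit $\delta_H/2$ is deliberately loose but conveniently parity-free. Then I would construct $S$ greedily: initialize $R \leftarrow X_{d,\omega}$ and $S \leftarrow \emptyset$; while $R \neq \emptyset$, pick any $x \in R$, add it to $S$, and remove $B(x) \cap R$ from $R$. By construction, any two distinct elements of the resulting $S$ have Hamming distance at least $\delta_H$, because each newly selected codeword lies outside the balls of all previously selected ones. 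Each iteration removes at most $\max_x |B(x)|$ elements of $R$, so the procedure exhausts $R$ only after at least
\[
\frac{|X_{d,\omega}|}{\max_x |B(x)|} \;\geq\; \frac{\binom{d}{\omega}}{\sum_{i=0}^{\delta_H/2}\binom{\omega}{i}\binom{d-\omega}{i}}
\]
iterations, which is exactly the claimed lower bound on $|S|$.

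There is no genuinely hard step here; the only delicate point is getting the ball-size count right, i.e., recognizing that the constant-weight Hamming-ball cardinality decomposes as $\binom{\omega}{i}\binom{d-\omega}{i}$ rather than the naive $\binom{d}{2i}$ that one would use on all of $\{0,1\}^d$. Once this count is in hand, the greedy packing argument is mechanical, and extending the summation range to $\delta_H/2$ absorbs the parity of $\delta_H$ uniformly, so no case split is needed to match the stated form of the denominator.
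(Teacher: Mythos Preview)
Your proposal is correct; the Gilbert--Varshamov greedy covering argument you outline is exactly the classical proof of this constant-weight code bound. The paper does not supply its own proof of this lemma at all---it simply cites the result from \cite{const-weight-coding}---so there is nothing to compare against beyond noting that your argument matches the standard derivation.
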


We are now ready to prove Lemma \ref{lem_const_weight_coding_cor}.

\begin{proof}[Proof of Lemma \ref{lem_const_weight_coding_cor}]
We first comment that it is always possible to select $x_1,x_2\in\{0,1\}^d$ with $\|x_1\|_1=\|x_2\|_1 = \Delta_H(x_1,x_2)/2 = d/2$
whenever $d\geq 2$ and $d$ is even.
Consequently, for $2\leq d< 16$ the bound $\log |S|\geq 0.0625d$ always holds.
In the remainder of the proof we shall focus on the case when $d\geq 16$.

The following lower and upper bounds for binomial coefficient $\binom{n}{k}$ are well-known:
\begin{equation}
\left(\frac{n}{k}\right)^k\leq \binom{n}{k}\leq \left(\frac{en}{k}\right)^k.
\label{eq_binom_approx}
\end{equation}
Applying Eq. (\ref{eq_binom_approx}) and the lower bound in Lemma \ref{lem_const_weight_coding} we have
\begin{eqnarray}
\ln|S| &\geq& \frac{d}{2}\ln 2 - \ln\left(\frac{d}{32}\right) - \ln\binom{\omega}{\delta_H/2} - \ln\binom{d-\omega}{\delta_H/2}\nonumber\\
&\geq& \frac{d}{2}\ln 2 - \ln\left(\frac{d}{32}\right) - \frac{d}{16}\ln(16e)\nonumber\\
&=& d\cdot\left(\frac{1}{2}\ln 2 - \frac{1}{16}\ln(16e)\right) - \ln\left(\frac{d}{32}\right)\nonumber\\
&\geq& 0.11d - 0.02d = 0.09d.
\end{eqnarray}
The last inequality is due to the fact that $0.02x-\ln(x/32) \geq 0$ for all $x > 0$.
Therefore, $\log |S| = \ln |S|/\ln 2 \geq 0.13 d \geq 0.0625d$.
%Finally, note that $\Delta_H(x,x')\geq d/16$ requires $d\geq 16$.
%\yining{The argument fails because for small $d$ we can't get a sufficient large $|S|$ using the argument above.
%Note that the dependency on $T$ is still valid.
%On the other hand, it is kind of meaningless to discuss dimension dependency when the dimension itself is small.
%One can always place a sufficiently large constant in front of the sample complexity to avoid any issue caused by small $d$.}
\end{proof}

\section{Proofs of some technical propositions}\label{appsec:technical_prop}

\begin{proof}[Proof of Proposition \ref{prop_pool_probe_reduction}]
Suppose we have a stream based algorithm $A\in\mathcal A_{d,T}^{\seq}$.
An equivalent query synthetic algorithm $B\in\mathcal A_{d,T}^{\prob}$ can be constructed based on $A$ as follows:
at iteration $t$ the algorithm $B$ repeatedly samples data points from $P_X$ until $A$ accepts a sample $x_t$;
it then picks $x_t$ and requests its label.
Clearly $A$ and $B$ are equivalent and they have the same worst-case expected excess risk.
\end{proof}

\begin{proof}[Proof of Proposition \ref{prop_regression-excess-TNC}]
Fix $w,w^*\in\mathbb R^d$ with $\|w\| = \|w^*\|=1$.
Denote $\Delta\subseteq \mathcal X$ as the set of data points on which $w$ and $w^*$ disagree; that is,
$\Delta = \{x\in \mathcal X: \sgn(w\cdot x)\neq \sgn(w^*\cdot x)\}$.
Suppose $\theta(w,w^*) = \theta$. 
We have the following:
\begin{equation}
\err(w) - \err(w^*) = \int_{\Delta}{2\bigg| \eta(x)-\frac{1}{2}\bigg|g(x)\ud x}
\geq 2\int_0^{\theta}{\mu_0\varphi^{\frac{\alpha}{1-\alpha}}\cdot \gamma\ud\varphi}
= 2(1-\alpha)\mu_0\gamma\cdot \theta^{\frac{1}{1-\alpha}}.
\end{equation}
\end{proof}

\end{document}